\newcommand{\tseq}[2]{\begin{tabular}{|l|l|}\hline $#1$ & $#2$ \\ \hline\end{tabular}}
\newcommand{\np}{\textit{np}}
\newcommand{\bs}{\backslash}
\newcommand{\os}{\varoslash}
\newcommand{\obs}{\varobslash}
\newcommand{\lgtimes}{\varotimes}
\newcommand{\lgplus}{\varoplus}
\spnewtheorem{thm}[theorem]{Theorem}{\bfseries}{\rm}
\spnewtheorem{cor}[theorem]{Corollary}{\bfseries}{\rm}
\spnewtheorem{vb}[theorem]{Example}{\bfseries}{\rm}
\spnewtheorem{lem}[theorem]{Lemma}{\bfseries}{\rm}
\spnewtheorem{rem}[theorem]{Remark}{\bfseries}{\rm}
\spnewtheorem{defs}[theorem]{Definition}{\bfseries}{\rm}
\title{Tableaux for the Lambek-Grishin Calculus}
\author{Arno Bastenhof}
\institute{Utrecht University}
\begin{document}
\bibliographystyle{plain}
\maketitle
Categorial type logics, pioneered by Lambek (\cite{lambek58}), seek a proof-theoretic understanding of natural language syntax by identifying categories with formulas and derivations with proofs. We typically observe an intuitionistic bias: a structural configuration of hypotheses (a constituent) derives a single conclusion (the category assigned to it). Acting upon suggestions of Grishin (\cite{grishin}) to dualize the logical vocabulary, Moortgat proposed the \textit{Lambek-Grishin calculus} (\textbf{LG}, \cite{moortgat09}) with the aim of restoring symmetry between hypotheses and conclusions. 

We propose a theory of \textit{labeled} modal tableaux (\cite{smullyan}) for \textbf{LG}, inspired by the interpretation of its connectives as binary modal operators in the relational semantics of \cite{kurtoninamoortgat10}. After a brief recapitulation of \textbf{LG}'s models in $\S$1, we define our tableaux in $\S$2 and ensure soundness and completeness in $\S$3. Linguistic applications are considered in $\S$4, where grammars based on \textbf{LG} are shown to be context-free through use of an interpolation lemma. This result complements \cite{melissenfg09}, where \textbf{LG} augmented by mixed associativity and -commutativity was shown to exceed LTAG in expressive power.

\section{Ternary frames and Lambek calculi}
We discuss ternary frame semantics for \textbf{NL} and its symmetric generalization. More in-depth discussions of the presented material is found in \cite{kurtonina95} and \cite{kurtoninamoortgat10}. 

\textit{Ternary frames} $\mathscr{F}$ are pairs $\langle W,R\rangle$ with $W$ an inhabited set of \textit{resources}, and $R\subseteq W^3$ a \textit{(ternary) accessibility relation}. Propositional variables (atoms) $p,q,r,\dots$ are identified with subsets of $W$. Formally, a \textit{model} $\mathscr{M}=\langle\mathscr{F},V\rangle$ extends $\mathscr{F}$ with an \textit{(atomic) valuation} $V$ mapping atoms to $\mathscr{P}(W)$. Connectives for (multiplicative) conjunction and implication, constructing derived formulas $A,B,C,\dots$, arise as \textit{binary modal operators} by extending $V$ as in
\begin{center}
\begin{tabular}{rcll}
$V(A\lgtimes B)$ & $:=$ & $\{ x \ | \ (\exists y,z)(Rxyz \ \textrm{and} \ y\in V(A) \ \textrm{and} \ z\in V(B))\}$ & \textit{(fusion)} \\
$V(C/B)$ & $:=$ & $\{ y \ | \ (\forall x,z)((Rxyz \ \textrm{and} \ z\in V(B))\Rightarrow x\in V(C))\}$ & \textit{(right impl.)} \\
$V(A\bs C)$ & $:=$ & $\{ z \ | \ (\forall x,y)((Rxyz \ \textrm{and} \ y\in V(A))\Rightarrow x\in V(C))\}$ & \textit{(left impl.)}
\end{tabular}
\end{center}
Kurtonina (\cite{kurtonina95}) explores linguistic applications. $W$ contains syntactic constituents and $Rxyz$ reads as binary merger: $x$ results from merging $y$ with $z$. Thus, one would adopt atoms $\np$ (its image under $V$ the collection of noun phrases), $s$ (sentences) and $n$ (common nouns), with subcategorization encoded by implications: $\np\bs s$ categorizes intransitive verbs, $(\np\bs s)/\np$ transitive verbs, etc.

\textit{Proofs} (or \textit{algebraic derivations}) of inequalities $A\leq B$ are intended to establish $V(A)\subseteq V(B)$ for arbitrary $\langle\mathscr{F},V\rangle$. On the linguistic reading, these are the \textit{language universals}: any language categorizing an expression by $A$ (e.g., $\np\lgtimes(\np\bs s)$, the merger of a noun phrase and an intransitive verb) must also categorize it by $B$ ($s$, as follows from the rules below). 
Next to the preorder axioms \textit{(Refl, Trans)} on $\leq$, the set $\{\lgtimes,/,\bs\}$ is \textit{residuated} $(r)$, with \textit{parent} $\lgtimes$ and (left and right) \textit{residuals} $\bs,/$ (the double line indicates \textit{inter}derivability)
\begin{center}
\begin{tabular}{ccccccc}
$\infer[\textit{Refl}]{A\leq A}{}$ \ \ & & 
 \ \ $\infer[\textit{Trans}]{A\leq C}{A\leq B & B\leq C}$ \ \ & &
 \ \ $\infer=[r]{B\leq A\bs C}{A\lgtimes B\leq C}$ \ & & 
 \ \ $\infer=[r]{A\leq C/B}{A\lgtimes B\leq C}$
\end{tabular}
\end{center}
validity w.r.t. arbitrary models being easily verified. The distinguished status of fusion leads us to write the corresponding accessibility relation as $R_{\lgtimes}$ from now on. We arrive at what is known as the \textit{non-associative Lambek calculus} (\textbf{NL}). Note that associativity and commutativity of $\lgtimes$ are not generally valid in a model, but rather depend on special frame constraints (cf. \cite{kurtonina95}):
\begin{center}
\begin{tabular}{|rcl|l|} \hline
\multicolumn{3}{|l|}{\textbf{Inequality}} & \textbf{Frame constraint} $(\forall a,b,c,y,x)$ \\ \hline
$A\lgtimes(B\lgtimes C)$ & $\leq$ & $(A\lgtimes B)\lgtimes C$ & $(R_{\lgtimes}xay \ \textrm{and} \ R_{\lgtimes}ybc)\Rightarrow (\exists t)(R_{\lgtimes}xtc \ \textrm{and} \ R_{\lgtimes}tab)$ \\
$(A\lgtimes B)\lgtimes C$ & $\leq$ & $A\lgtimes(B\lgtimes C)$ & $(R_{\lgtimes}xtc \ \textrm{and} \ R_{\lgtimes}tab)\Rightarrow (\exists x)(R_{\lgtimes}xay \ \textrm{and} \ R_{\lgtimes}ybc)$ \\
$A\lgtimes B$ & $\leq$ & $B\lgtimes A$ & $R_{\lgtimes}xab\Rightarrow R_{\lgtimes}xba$ \\ \hline
\end{tabular}
\end{center}
Grishin (\cite{grishin}) first suggested extending \textbf{NL} by a family of \textit{co}residuated connectives $\{\lgplus,\os,\obs\}$ with parent $\lgplus$ (\textit{fission}) and left- and right coresiduals $\obs,\os$ (\textit{subtractions}), mirroring $\{\lgtimes,/,\bs\}$ in $\leq$:
\begin{center}
\begin{tabular}{ccc}
$\infer=[\textit{cr}]{C\os B\leq A}{C\leq A\lgplus B}$ \ \ & &
 \ \ $\infer=[\textit{cr}]{A\obs C\leq B}{C\leq A\lgplus B}$
\end{tabular}
\end{center}
Moortgat names this the \textit{Lambek-Grishin calculus} (\textbf{LG}) in \cite{moortgat09}. In contrast with \textit{classical} \textbf{NL} (\cite{degrootelamarche}), \textbf{LG} does not internalize its duality with linear negation. Thus, we cannot simply interpret fission and subtraction as the De Morgan duals of fusion and implication. Instead, we have to consider frames $\mathscr{F}=\langle W,R_{\lgtimes},R_{\lgplus}\rangle$ with a second accessibility relation $R_{\lgplus}\subseteq W^3$:
\begin{center}
\begin{tabular}{rcl}
$x\in V(A\lgplus B)$ & $\Leftrightarrow$ & $(\forall y,z)(R_{\lgplus}xyz\Rightarrow(y\in V(A) \ \textrm{or} \ z\in V(B)))$ \\
$y\in V(C\os B)$ & $\Leftrightarrow$ & $(\exists x,z)(R_{\lgplus}xyz \ \textrm{and} \ z\in V(B) \ \textrm{and} \ x\in V(C))$ \\
$z\in V(A\obs C)$ & $\Leftrightarrow$ & $(\exists x,y)(R_{\lgplus}xyz \ \textrm{and} \ y\in V(A) \ \textrm{and} \ x\in V(C))$
\end{tabular}
\end{center}
We conclude by mentioning previous work on the proof theory of \textbf{LG}, motivating our own tableau approach. First, a negative result: while Lambek (\cite{lambek61}) gave a sequent calculus for \textbf{NL}, extending it to \textbf{LG} by mirroring the inference rules sacrifices Cut admissibility.\footnote{Bernardi and Moortgat give a(n unpublished) counterexample with the two-formula sequent $A\lgtimes(C\os((A\bs B)\obs C))\vdash B$.} Moortgat (\cite{moortgat09}) instead defines a \textit{display calculus} for \textbf{LG}, based on the observation that (algebraic) transitivity is admissible in the presence of (co)residuation and monotonicity:
\begin{center}
\begin{tabular}{ccccc}
$\infer{\begin{array}{l}A\lgtimes C\leq B\lgtimes D \\ A\lgplus C\leq B\lgplus D\end{array}}{A\leq B & C\leq D}$ & \ \ &
$\infer{\begin{array}{l}A/D\leq B/C \\ D\bs A\leq C\bs B\end{array}}{A\leq B & C\leq D}$ & \ \ &
$\infer{\begin{array}{l}A\os D\leq B\os C \\ D\obs A\leq C\obs B\end{array}}{A\leq B & C\leq D}$
\end{tabular}
\end{center}
Our own approach to \textbf{LG} theorem proving is rather in the tradition of labeled modal tableaux, mixing the language of formulas with that of the models interpreting them. Equivalently, the old ``turn your derivations upside-down'' trick renders it as a \textit{labeled} sequent calculus, representing by a single labeled sequent those display sequents of \cite{moortgat09} that are interderivable by (co)residuation. Moreover, as Lemma \ref{cutelim} shows, Cut-admissibility is recovered. 

\section{A labeled tableau calculus for LG}
Fix a denumerable collection of variables $x,y,z,\dots$, to be thought of as a set $W$ of resources. By a \textit{signed formula} we understand a formula suffixed by $\cdot^{\bullet}$ or $\cdot^{\circ}$. We also speak of \textit{input} formulas $A^{\bullet}$ and \textit{output} formulas $A^{\circ}$. A \textit{labeled} signed formula pairs a signed formula with a variable. Intuitively, a pair $x:A^{\bullet}$ asserts $A$ to be true at point $x$, whereas $y:B^{\circ}$ asserts $B$ to be false at point $y$. We sometimes use  meta-variables $\phi,\psi,\omega$, using the suffix $\cdot^{\bot}$ for switching signs: $\phi^{\bot}$ denotes $x:A^{\circ}$ if $\phi=x:A^{\bullet}$ and $x:A^{\bullet}$ if $\phi=x:A^{\circ}$.

Tableau rules operate on \textit{boxes} $\tseq{\Theta}{\Gamma}$, understood linguistically as encoding syntactic descriptions: phrase structure is specified by means of an \textit{unrooted tree} $\Theta$, with $\Gamma$ defining a cyclic order on the words attached to its leaves. More specifically, $\Gamma$ denotes a finite list of signed formulas (categorizing words) labeled by variables found at the leaves of $\Theta$, such that 'provability' of a box $\tseq{\Theta}{\Gamma}$ will be closed under cyclic permutations of $\Gamma$. We describe trees $\Theta$ by multisets of conditions $R_{\lgtimes}xyz$, $R_{\lgplus}xyz$: each variable in (a condition of) $\Theta$ has its own node, any condition $R_{\lgplus}xyz$ or $R_{\lgtimes}xyz$ in $\Theta$ introduces a fresh node with edges (precisely) to $x,y,z$, and any variable occurs at most twice.\footnote{Such structures previously appeared in the literature on (non-associative) proof nets as \textit{tensor trees} in \cite{moot07} and as \textit{tree signatures} in \cite{lamarche03}, the latter building forth on \cite{degrootelamarche}. In \cite{kurtonina95}, a similar encoding of rooted trees by means of the accessibility relation $R_{\lgtimes}xyz$ was proposed for \textbf{NL}.}
\begin{center}
\begin{tabular}{|l|l|l|l|l|} \hline
\textbf{Trees} $\Theta$ & $x$ & $R_{\lgtimes}xyz$ & $R_{\lgplus}xyz$ & $(\Theta'\bigcup\Theta'')-(N(\Theta')\bigcup N(\Theta''))$ \\ \hline\hline
\multirow{2}{*}{\textbf{Conditions}} & - & - & - & $N(\Theta')\bigcap N(\Theta'')=\{ x\}$ \\
 & & & & $x\in C(\Theta'), x\in H(\Theta'')$ \\ \hline
\textbf{Nodes} $N(\Theta)$ & $\{ x\}$ & $\{ x,y,z\}$ & $\{ x,y,z\}$ & $N(\Theta')\bigcup N(\Theta'')$ \\ \hline
\textbf{Hypotheses} $H(\Theta)$ & $\{ x\}$ & $\{ y,z\}$ & $\{ x\}$ & $(H(\Theta')\bigcup H(\Theta''))/\{ x\}$ \\ \hline
\textbf{Conclusions} $C(\Theta)$ & $\{ x\}$ & $\{ x\}$ & $\{ y,z\}$ & $(C(\Theta')\bigcup C(\Theta''))/\{ x\}$ \\ \hline
\end{tabular}
\end{center}
Thus, for any such 'tree' $\Theta$, $\tseq{\Theta}{\Gamma}$ is a box in case the hypotheses of $\Theta$ label input formulas of $\Gamma$, whereas its conclusions label output formulas. Note that our use of multiset difference in the definition of complex trees implies $x\in\Theta$ only if $\Theta$ is a singleton. The purpose of such trees $\{ x\}$ is to guarantee well-definedness for the concepts $N(\Theta)$, $H(\Theta)$ and $C(\Theta)$ w.r.t. two-formula boxes $\tseq{x}{x:A^{\bullet},x:B^{\circ}}$. We shall often abbreviate $(\Theta\bigcup\Theta')-(N(\Theta)\bigcup N(\Theta'))$ by $\Theta,\Theta'$ (in particular: $\Theta,\{ x\}=\Theta$), and similarly write $\Gamma,\Delta$ for list concatenation. 

Labeled signed formulas are classified into types $\alpha,\beta$ according to Smullyan's unified notation:
\begin{center}
\begin{tabular}{|c|ccc||c|ccc|} \hline
$\alpha$ & $\alpha_1(y)$ \ \ & \ \ $\alpha_2(z)$ \ \ & \ \ $R_{\alpha}yz$ \ \ & $\beta$ & $\beta_1(y)$ \ \ & \ \ $\beta_2(z)$ \ \ & \ \ $R_{\beta}yz$ \\ \hline
$x:(A/B)^{\circ}$ & $y:B^{\bullet}$ & $z:A^{\circ}$ & $R_{\lgtimes}zxy$ & $x:(A/B)^{\bullet}$ & $y:B^{\circ}$ & $z:A^{\bullet}$ & $R_{\lgtimes}zxy$ \\
$x:(B\bs A)^{\circ}$ & $y:A^{\circ}$ & $z:B^{\bullet}$ & $R_{\lgtimes}zyx$ & $x:(B\bs A)^{\bullet}$ & $y:A^{\bullet}$ & $z:B^{\circ}$ & $R_{\lgtimes}zyx$ \\
$x:(A\lgtimes B)^{\bullet}$ & $y:A^{\bullet}$ & $z:B^{\bullet}$ & $R_{\lgtimes}xyz$ & $x:(A\lgtimes B)^{\circ}$ & $y:A^{\circ}$ & $z:B^{\circ}$ & $R_{\lgtimes}xyz$ \\
$x:(A\os B)^{\bullet}$ & $y:A^{\bullet}$ & $z:B^{\circ}$ & $R_{\lgplus}zxy$ & $x:(A\os B)^{\circ}$ & $y:A^{\circ}$ & $z:B^{\bullet}$ & $R_{\lgplus}zxy$ \\
$x:(B\obs A)^{\bullet}$ & $y:B^{\circ}$ & $z:A^{\bullet}$ & $R_{\lgplus}zyx$ & $x:(B\obs A)^{\circ}$ & $y:B^{\bullet}$ & $z:A^{\circ}$ & $R_{\lgplus}zyx$ \\  
$x:(A\lgplus B)^{\circ}$ & $y:B^{\circ}$ & $z:A^{\circ}$ & $R_{\lgplus}xyz$ & $x:(A\lgplus B)^{\bullet}$ & $y:B^{\bullet}$ & $z:A^{\bullet}$ & $R_{\lgplus}xyz$ \\ \hline
\end{tabular}
\end{center}
Tableaux may then be expanded by either one of the following rules, of which the second is said to \textit{branch}:\footnote{The current formulation may be considered a labeling of Abrusci's sequent calculus for cyclic linear logic in \cite{abrusci02}, where cyclic permutations were compiled away into the logical inferences.}
\begin{center}
\begin{tabular}{ccc}
$\infer[\alpha]{\tseq{R_{\alpha}yz,\Theta}{\Gamma,\alpha_1(y),\alpha_2(z),\Delta}}{
\tseq{\Theta}{\Gamma,\alpha,\Delta}}$ & \ \ \ \ \ &
$\infer[\beta]{\tseq{\Theta}{\Gamma,\beta_1(y),\Gamma'} \ \ \ \ \tseq{\Theta'}{\Delta,\beta_2(z),\Delta'}}{
\tseq{R_{\beta}yz,\Theta,\Theta'}{\Gamma,\Delta,\beta,\Gamma',\Delta'}}$
\end{tabular}
\end{center}
Here, in $(\alpha)$, $y,z$ are to be fresh in the current branch, whereas for $(\beta)$, either $\Gamma=\emptyset$ or $\Delta'=\emptyset$, and
\begin{center}
\begin{tabular}{ccc}
\begin{tabular}{l}
$N(\Theta)\bigcap N(\Theta')=\emptyset$, \\
$N(R_{\beta}yz)\bigcap N(\Theta)=\{ z\}$, \\
$N(R_{\beta}yz)\bigcap N(\Theta')=\{ y\}$ 
\end{tabular} & \ \ \ \ and \ \ \ \ &
\begin{tabular}{|l|l|l|} \hline
 & $\in H(R_{\beta}yz),$ & $\in C(R_{\beta}yz),$ \\ \hline
If $y$ & then $y\in C(\Theta')$ & then $y\in H(\Theta')$ \\
If $z$ & then $z\in H(\Theta)$ & then $z\in C(\Theta)$ \\ \hline
\end{tabular}
\end{tabular}
\end{center}
A tableau branch ending in a box $\tseq{x}{x:p^{\bullet},x:p^{\circ}}$ or $\tseq{x}{x:p^{\circ},x:p^{\bullet}}$ is \textit{closed}, and a tableau is closed if all its branches are. We also say $\tseq{\Theta}{\Gamma}$ closes if it has a closed tableau. A tableau for a two-formula \textit{sequent} $A\vdash B$ is a tableau of $\tseq{x}{x:A^{\bullet},x:B^{\circ}}$, called a \textit{proof} of $A\vdash B$ if it closes. An easy induction establishes
\begin{lem}\label{rename}
The property of having a closed tableau (of a box) is preserved under renaming of variables.
\end{lem}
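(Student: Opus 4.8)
The plan is to prove the following sharper statement, of which the lemma is a special case: if $\sigma$ is a permutation of the variable set $W$ and $T$ is a closed tableau of a box $B$, then the tree $\sigma T$ obtained by replacing throughout $T$ every variable $v$ (occurring in a label or in an accessibility condition) by $\sigma v$ is again a closed tableau, now of $\sigma B$. This yields the lemma, since any injective renaming of the finitely many variables appearing in $B$ extends, $W$ being denumerable, to a permutation of $W$ whose restriction to those variables is the given renaming. I would argue by induction on the height of $T$. In the base case $T$ is a single closed box, which up to sign-swap has the form $x:p^{\bullet},\ x:p^{\circ}$ over the singleton tree $\{x\}$; its image under $\sigma$ is $\sigma x:p^{\bullet},\ \sigma x:p^{\circ}$ over $\{\sigma x\}$, still a closed box since the atom $p$ is untouched, so $\sigma T$ is closed.

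For the inductive step, $B$ is the conclusion of an instance of $(\alpha)$ (resp. $(\beta)$) whose single premise box (resp. two premise boxes) carries a closed subtableau $T'$ (resp. $T'$ and $T''$) of strictly smaller height. The entries of Smullyan's table depend only on the principal connective and its sign, treating the variables as mere parameters: uniformly substituting $\sigma x,\sigma y,\sigma z$ for $x,y,z$ in a row sends $\alpha,\ \alpha_1(y),\ \alpha_2(z),\ R_{\alpha}yz$ to the corresponding entries for the $\sigma$-renamed principal formula, and similarly for $\beta$. Hence the very same rule instance, with its variables renamed by $\sigma$, has conclusion $\sigma B$ and premise(s) exactly the $\sigma$-images of the premise box(es) of the original instance; grafting onto it the closed subtableau(x) furnished by the induction hypothesis produces $\sigma T$.

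What remains is to check that the side conditions of the rules are stable under $\sigma$ — the one point requiring a little care, though the verdict is never in doubt. These conditions are purely combinatorial facts about the sets $N,H,C$ and about list structure: freshness of $y,z$ in the current branch for $(\alpha)$; the clauses $N(\Theta)\cap N(\Theta')=\emptyset$, $N(R_{\beta}yz)\cap N(\Theta)=\{z\}$, $N(R_{\beta}yz)\cap N(\Theta')=\{y\}$ together with the hypothesis/conclusion membership table governing $(\beta)$; the proviso that $\Gamma=\emptyset$ or $\Delta'=\emptyset$; and the well-formedness of each box (hypotheses labelling input formulas, conclusions output formulas, every variable occurring at most twice). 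Since $N,H,C$ are computed by a recursion that inspects only variable identities, an immediate sub-induction on tree structure gives $N(\sigma\Theta)=\sigma N(\Theta)$, $H(\sigma\Theta)=\sigma H(\Theta)$ and $C(\sigma\Theta)=\sigma C(\Theta)$, after which each of the conditions above transfers verbatim along the bijection $\sigma$. It is precisely here that bijectivity is used: a non-injective renaming could collapse a fresh parameter onto a pre-existing node, or create a third occurrence of some variable, invalidating a rule application — which is why one restricts attention to permutations of the infinite set $W$. This completes the induction, and with it the lemma.
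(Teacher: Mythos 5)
Your proof is correct and follows essentially the route the paper intends: the paper dispatches this lemma with the remark ``an easy induction,'' and your argument is exactly that induction on the tableau, with the (sensible) reading of renaming as injective, the extension to a permutation of the denumerable variable set, and the verification that the $\alpha$/$\beta$ rule instances and their side conditions (freshness, the $N$, $H$, $C$ constraints) commute with a bijective renaming. Nothing further is needed.
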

\noindent Say a tableau of $\tseq{\Theta}{\Gamma}$ \textit{closes via $\phi\in\Gamma$} if the first expansion immediately targets $\phi$, and let the \textit{degree} of a formula $A$ denote the number of connectives in $A$.
\begin{lem}\label{cycl} \textit{Cyclic permutation} is admissible: if we have a closed tableau $\mathscr{T}$ of $\tseq{\Theta}{\Gamma,\Delta}$, then $\tseq{\Theta}{\Delta,\Gamma}$ also closes.
\end{lem}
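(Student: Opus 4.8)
The plan is to induct on the number of rule applications in the given closed tableau $\mathscr{T}$ of $\tseq{\Theta}{\Gamma,\Delta}$, with a case split on the \emph{first} expansion, which is well defined and --- unless $\mathscr{T}$ is a single closed box --- is either an $(\alpha)$- or a $(\beta)$-step. Two facts are used throughout. First, $\tseq{\Theta}{\Delta,\Gamma}$ is again a box, since it carries the same tree $\Theta$ and the same labelled signed formulas as $\tseq{\Theta}{\Gamma,\Delta}$ and the box conditions constrain only which labels occur with which sign, not the order of the list. Second, by Lemma~\ref{rename} the fresh variables spawned in independently constructed sub-tableaux may be kept disjoint. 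The base case is immediate: a closed box $\tseq{x}{x:p^{\bullet},x:p^{\circ}}$ (or its mirror) stays closed when its two entries are transposed.

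If the first expansion is an $(\alpha)$-step, the decomposed formula $\alpha$ is a single entry of the list $\Gamma,\Delta$, so it lies entirely inside $\Gamma$ or entirely inside $\Delta$; say $\Gamma=\Gamma_1,\alpha,\Gamma_2$, so the sub-tableau closes $\tseq{R_{\alpha}yz,\Theta}{\Gamma_1,\alpha_1(y),\alpha_2(z),\Gamma_2,\Delta}$. I would then apply $(\alpha)$ to $\tseq{\Theta}{\Delta,\Gamma_1,\alpha,\Gamma_2}$ --- with the same still-fresh $y,z$ --- reducing the goal to closing $\tseq{R_{\alpha}yz,\Theta}{\Delta,\Gamma_1,\alpha_1(y),\alpha_2(z),\Gamma_2}$; but this box is precisely the cyclic permutation (``move the block $\Delta$ to the front'') of the one the sub-tableau closes, so the induction hypothesis, applied to that strictly smaller sub-tableau, finishes the case. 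Here a single appeal to the induction hypothesis suffices, since it is the full cyclic-permutation statement and handles block rotations directly. The sub-case $\alpha\in\Delta$ is symmetric.

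If the first expansion is a $(\beta)$-step, the list $\Gamma,\Delta$ has the shape $\Lambda_1,\Lambda_2,\beta,\Lambda_1',\Lambda_2'$ with $\Lambda_1=\emptyset$ or $\Lambda_2'=\emptyset$, and the two sub-tableaux close $\tseq{\Theta_1}{\Lambda_1,\beta_1(y),\Lambda_1'}$ and $\tseq{\Theta_2}{\Lambda_2,\beta_2(z),\Lambda_2'}$. The cut between $\Gamma$ and $\Delta$ falls at one of the finitely many junctions of those five blocks; for each placement I would re-parse the rotated list $\Delta,\Gamma$ as a \emph{legal} $(\beta)$-conclusion $M_1,M_2,\beta,M_1',M_2'$ (again with $M_1=\emptyset$ or $M_2'=\emptyset$) whose premises $\tseq{\Theta_1}{M_1,\beta_1(y),M_1'}$ and $\tseq{\Theta_2}{M_2,\beta_2(z),M_2'}$ are cyclic rotations of the original two premises, and discharge each by the induction hypothesis on the relevant strictly smaller sub-tableau. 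Since the re-parse leaves the tree split $\Theta=R_{\beta}yz,\Theta_1,\Theta_2$ and the variables $y,z$ untouched, every node- and hypothesis/conclusion side condition of $(\beta)$ carries over verbatim, and boxhood of the two new premises follows from that of the originals as in the first fact above.

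The hard part is precisely this $(\beta)$-bookkeeping: showing that for every admissible position of the $\Gamma\mid\Delta$ cut within $\Lambda_1,\Lambda_2,\beta,\Lambda_1',\Lambda_2'$ such a re-split exists. The side condition ``$\Lambda_1=\emptyset$ or $\Lambda_2'=\emptyset$'' is exactly what makes it go through: it forces all $\Theta_1$-labelled formulas of the $(\beta)$-conclusion to form one arc when the list is read cyclically, and all $\Theta_2$-labelled ones another --- a property stable under rotation --- so that any rotation of the conclusion can again be cut as $M_1,M_2,\beta,M_1',M_2'$ with $M_1,M_1'$ carrying $\Theta_1$-labels and $M_2,M_2'$ carrying $\Theta_2$-labels, one of $M_1,M_2'$ necessarily empty, and the two premises mere rotations of the given ones. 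Tabulating the few placements of the cut (inside $\Lambda_1$; between $\Lambda_1$ and $\Lambda_2$; inside $\Lambda_2$; immediately before or after $\beta$; inside $\Lambda_1'$ or $\Lambda_2'$), each needs at most one appeal to the induction hypothesis per premise, and nothing deeper.
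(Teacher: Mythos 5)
Your proposal is correct and follows essentially the same route as the paper: induct (your measure, the number of expansions, plays the same role as the paper's combined degree), case-split on the first expansion, handle $\alpha$ by one appeal to the induction hypothesis followed by a fresh $\alpha$-step, and handle $\beta$ by re-cutting the rotated list as a legal $\beta$-conclusion whose premises are rotations of the originals, exactly exploiting the side condition that one of the outer blocks is empty. Your discussion of the $\beta$-bookkeeping is in fact more explicit than the paper's, which only displays one representative placement of the cut and notes the degenerate sub-case in a closing remark.
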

\begin{proof}
By induction on the (combined) degree of (the formulas in) $\Gamma,\Delta$. 
\begin{enumerate}
\item $\tseq{\Theta}{\Gamma,\Delta}=\tseq{x}{x:p^{\bullet},x:p^{\circ}}$ or $\tseq{\Theta}{\Gamma,\Delta}=\tseq{x}{x:p^{\circ},x:p^{\bullet}}$: immediate.
\item $\mathscr{T}$ closes via some $\alpha\in\Gamma,\Delta$. Say $\alpha\in\Gamma$, i.e., $\Gamma=\Gamma',\alpha,\Gamma''$:
\begin{prooftree}
\rootAtTop
\AxiomC{\tseq{R_{\alpha}yz,\Theta}{\Gamma',\alpha_1(y),\alpha_2(z),\Gamma'',\Delta}}
\RightLabel{$\alpha$}
\UnaryInfC{\tseq{\Theta}{\Gamma',\alpha,\Gamma'',\Delta}}
\end{prooftree}
Then, by the induction hypothesis, $\tseq{R_{\alpha}yz,\Theta}{\Delta,\Gamma',\alpha_1(y),\alpha_2(z),\Gamma''}$ also closes, so that the statement of the lemma now obtains by another $\alpha$-expansion.

\item $\mathscr{T}$ closes via some $\beta\in\Gamma,\Delta$. I.e., $\Gamma,\Delta=\Gamma_1,\Gamma_2,\beta,\Gamma_1',\Gamma_2'$ and $\Theta=R_{\beta}yz,\Theta,\Theta'$, with, for example, $\Gamma_2'=\Delta_1,\Delta_1'$, $\Gamma=\Gamma_1,\Gamma_2,\beta,\Gamma_1',\Delta_1$ and $\Delta=\Delta_1'$: 
\begin{prooftree}
\rootAtTop
\AxiomC{\tseq{\Theta}{\Gamma_1,\beta_1(y),\Gamma_1'}}
\AxiomC{\tseq{\Theta'}{\Gamma_2,\beta_2(z),\Delta_1,\Delta_1'}}
\RightLabel{$\beta$}
\BinaryInfC{\tseq{R_{\beta}yz,\Theta,\Theta'}{\Gamma_1,\Gamma_2,\beta,\Gamma_1',\Delta_1,\Delta_1'}}
\end{prooftree}
By the induction hypothesis, $\tseq{\Theta'}{\Delta_1',\Gamma_2,\beta_2(z),\Delta_1}$ closes, so that another $\beta$-expansion suffices:
\begin{prooftree}
\rootAtTop
\AxiomC{\tseq{\Theta}{\Gamma_1,\beta_1(y),\Gamma_1'}}
\AxiomC{\tseq{\Theta'}{\Delta_1',\Gamma_2,\beta_2(z),\Delta_1}}
\RightLabel{$\beta$}
\BinaryInfC{\tseq{R_{\beta}yz,\Theta,\Theta'}{\Gamma_1,\Delta_1',\Gamma_2,\beta,\Gamma_1',\Delta_1}}
\end{prooftree}
noting that if $\Gamma_1\not =\emptyset$, then $\Delta(=\Delta_1,\Delta_1')=\emptyset$. 
\end{enumerate}
\end{proof}

\begin{vb}\label{example_tableau} 
We have a proof of $p\lgtimes(r\os((p\bs q)\obs r))\vdash q$, which served as a counterexample to Cut elimination in an earlier sequent calculus for \textbf{LG}.
\begin{prooftree}
\rootAtTop
\AxiomC{\tseq{x}{x:q^{\bullet},x:q^{\circ}}}
\AxiomC{\tseq{y}{y:p^{\bullet},y:p^{\circ}}}
\RightLabel{$\beta$}
\BinaryInfC{\tseq{R_{\lgtimes}xyz}{y:p^{\bullet},z:(p\bs q)^{\bullet},x:q^{\circ}}}
\AxiomC{\tseq{u}{u:r^{\bullet},u:r^{\circ}}}
\RightLabel{$\beta$}
\BinaryInfC{\tseq{R_{\lgplus}uzv,R_{\lgtimes}xyz}{y:p^{\bullet},u:r^{\bullet},v:((p\bs q)\obs r)^{\circ},x:q^{\circ}}}
\RightLabel{$\alpha(\times 2)$}
\UnaryInfC{\tseq{x}{x:(p\lgtimes(r\os((p\bs q)\obs r)))^{\bullet},x:q^{\circ}}}
\end{prooftree}
\end{vb}

\begin{vb}\label{ling_ex1}
We have previously understood boxes as encodings for syntactic descriptions. We further illustrate this claim by representing the derivation of a simple transitive clause by a closed tableau. Consider the following lexicon for \textit{He saw Pete}, consisting of a pairing of words with signed formulas:
\begin{center}
\begin{tabular}{|ccc|}\hline
 \ he \ & \ saw \ & \ Pete \ \\ 
 \ \ $(s/(\np\bs s))^{\bullet}$ \ \ & \ \ $((\np\bs s)/\np)^{\bullet}$ \ \ & \ \ $\np^{\bullet}$ \ \ \\ \hline
\end{tabular}
\end{center}
The formula $(s/(\np\bs s))^{\bullet}$ for \textit{he} was proposed by Lambek (\cite{lambek58}) in order to exclude occurrences in object positions. Grammaticality of the sentence under consideration w.r.t. a \textit{goal} (signed) formula $s^{\circ}$ is now established by a closed tableau 
\begin{prooftree}
\rootAtTop
\AxiomC{\tseq{p}{p:\np^{\circ},p:\np^{\bullet}}}
\AxiomC{\tseq{u}{u:s^{\circ},u:s^{\bullet}}}
\AxiomC{\tseq{v}{v:\np^{\bullet},v:\np^{\circ}}}
\RightLabel{$\beta$}
\BinaryInfC{\tseq{R_{\lgtimes}uvz}{u:s^{\circ},v:\np^{\bullet},z:(\np\bs s)^{\bullet}}}
\RightLabel{$\alpha$}
\UnaryInfC{\tseq{z}{z:(\np\bs s)^{\circ},z:(\np\bs s)^{\bullet}}}
\AxiomC{\tseq{x}{x:s^{\bullet},x:s^{\circ}}}
\RightLabel{$\beta$}
\BinaryInfC{\tseq{R_{\lgtimes}xhz}{h:(s/(\np\bs s))^{\bullet},z:(\np\bs s)^{\bullet},x:s^{\circ}}}
\RightLabel{$\beta$}
\BinaryInfC{\tseq{R_{\lgtimes}xhz,R_{\lgtimes}zwp}{h:(s/(\np\bs s))^{\bullet},w:((\np\bs s)/\np)^{\bullet},p:\np^{\bullet},x:s^{\circ}}}
\end{prooftree}
We note that, in \textbf{LG}, nothing prevents us from coupling words with output formulas. For example, the following lexicon would do just as well:
\begin{center}
\begin{tabular}{|ccc|}\hline
 \ he \ & \ saw \ & \ Pete \ \\ 
 \ \ $((\np\bs s)\os s)^{\circ}$ \ \ & \ \ $(\np\os (\np\bs s))^{\circ}$ \ \ & \ \ $\np^{\bullet}$ \ \ \\ \hline
\end{tabular}
\end{center}
as witnessed by the tableau
\begin{prooftree}
\rootAtTop
\AxiomC{\tseq{u}{u:s^{\circ},u:s^{\bullet}}}
\AxiomC{\tseq{v}{v:\np^{\bullet},v:\np^{\circ}}}
\RightLabel{$\beta$}
\BinaryInfC{\tseq{R_{\lgtimes}uvy}{u:s^{\circ},v:\np^{\bullet},y:(\np\bs s)^{\circ}}}
\AxiomC{\tseq{p}{p:\np^{\circ},p:\np^{\bullet}}}
\RightLabel{$\beta$}
\BinaryInfC{\tseq{R_{\lgtimes}uvy,R_{\lgplus}pwy}{u:s^{\circ},v:\np^{\bullet},w:(\np\os(\np\bs s))^{\circ},p:\np^{\bullet}}}
\RightLabel{$\alpha$}
\UnaryInfC{\tseq{R_{\lgplus}pwy}{y:(\np\bs s)^{\circ},w:(\np\os(\np\bs s))^{\circ},p:\np^{\bullet}}}
\AxiomC{\tseq{x}{x:s^{\bullet},x:s^{\circ}}}
\RightLabel{$\beta$}
\BinaryInfC{\tseq{R_{\lgplus}yhx,R_{\lgplus}pwy}{h:((\np\bs s)\os s)^{\circ},w:(\np\os(\np\bs s))^{\circ},p:\np^{\bullet},x:s^{\circ}}}
\end{prooftree}
\end{vb}

\section{Soundness and completeness}
Let $S$ be a finite set of labeled signed formulas and conditions $R_{\lgtimes}xyz,R_{\lgplus}xyz$. An \textit{interpretation} for $S$ is a pair $I=\langle\mathscr{M},\cdot^*\rangle$ with $\cdot^*$ a mapping of the variables occurring in $S$ to the resources of $\mathscr{M}$. \textit{Truth} w.r.t. $I$ is defined by
\begin{enumerate}
\item $R_{\delta}xyz\in S$ ($\delta\in\{\lgtimes,\lgplus\}$) is true w.r.t. $I$ in case $R_{\delta}x^*y^*z^*$ in $\mathscr{M}$.
\item $x:A^{\bullet}\in S$ is true w.r.t. $I$ if $x^*\in V(A)$ and false if $x^*\not\in V(A)$
\item $y:B^{\circ}\in S$ is true w.r.t. $I$ if $x^*\not\in V(A)$ and false if $x^*\in V(A)$
\end{enumerate}
Call $S$ \textit{satisfiable} if for some interpretation $I$, all elements of $S$ are true w.r.t. $I$.
The following observation, made w.r.t. arbitrary $I$, implies Lemma \ref{setsat}:\footnote{These conditions are easily seen to be classically equivalent to those provided in $\S$2.}
\begin{center}
\begin{tabular}{lcl}
$\alpha$ is true & $\Leftrightarrow$ & $R_{\alpha}yz$, $\alpha_1(y)$ and $\alpha_2(z)$ are true for some $y,z$ \\
$\beta$ is true & $\Leftrightarrow$ & $R_{\beta}yz$ implies $\beta_1(y)$ or $\beta_2(z)$, for arbitrary $y,z$
\end{tabular}
\end{center}
\begin{lem}\label{setsat} For any set $S$ of labeled signed formulas and conditions $R_{\lgtimes}xyz,R_{\lgplus}xyz$,
\begin{itemize}
\item[(a)] If $S$ is satisfiable and $\alpha\in S$, then for fresh $y,z$ so is $S\bigcup\{ R_{\alpha}yz,\alpha_1(y),\alpha_2(z)\}$
\item[(b)] If $S$ is satisfiable and $\beta,R_{\beta}yz\in S$, then so is $S\bigcup\{\beta_1(y)\}$ or $S\bigcup\{\beta_2(z)\}$
\end{itemize}
\end{lem}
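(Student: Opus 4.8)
The plan is to read off both parts directly from the boxed observation preceding the statement; the remaining work is almost entirely bookkeeping about variables. First I would record the one auxiliary fact that licenses the freshness step in (a): the truth of a labeled signed formula, or of a condition $R_{\delta}xyz$, w.r.t.\ an interpretation $I=\langle\mathscr{M},\cdot^*\rangle$ depends only on the values $\cdot^*$ assigns to the variables actually occurring in it. Consequently, if $I'$ is an interpretation over the same $\mathscr{M}$ whose assignment agrees with $\cdot^*$ on every variable occurring in $S$, then $I'$ satisfies $S$ whenever $I$ does; this is immediate from the clauses defining truth.

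For (a), let $I=\langle\mathscr{M},\cdot^*\rangle$ satisfy $S$. Since $\alpha\in S$, $\alpha$ is true w.r.t.\ $I$, so the left-to-right direction of the $\alpha$-clause of the observation yields resources $a,b$ of $\mathscr{M}$ such that the interpretation $I'$ obtained by extending $\cdot^*$ with $y\mapsto a$ and $z\mapsto b$ makes $R_{\alpha}yz$, $\alpha_1(y)$ and $\alpha_2(z)$ true. This extension is well defined precisely because $y,z$ are fresh for $S$, hence not yet in the domain of $\cdot^*$; and by the auxiliary fact $I'$ still satisfies $S$. Thus $I'$ witnesses satisfiability of $S\cup\{R_{\alpha}yz,\alpha_1(y),\alpha_2(z)\}$.

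For (b), let $I=\langle\mathscr{M},\cdot^*\rangle$ satisfy $S$; now $y,z$ already occur in $S$, inside $R_{\beta}yz$, so no extension is needed. Both $\beta$ and $R_{\beta}yz$ are true w.r.t.\ $I$, so the $\beta$-clause of the observation --- read for the resources $y^*,z^*$ --- forces $\beta_1(y)$ or $\beta_2(z)$ to be true w.r.t.\ $I$. In the first case $I$ already satisfies $S\cup\{\beta_1(y)\}$, in the second $S\cup\{\beta_2(z)\}$, which is the required disjunction.

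I do not expect a genuine obstacle here: all the content is packed into the boxed observation, whose verification against the clauses of $\S$2 is the only case-ridden part (one row of the $\alpha$/$\beta$ table at a time, by routine De Morgan laws and quantifier shuffling), whereas the present argument only needs to track which variables are fresh --- handled explicitly in (a) and vacuous in (b).
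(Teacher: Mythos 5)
Your proposal is correct and follows essentially the same route as the paper, which simply derives the lemma from the boxed $\alpha$/$\beta$ truth-condition observation (itself justified by the footnote on classical equivalence with the clauses of $\S$2). The only difference is that you make explicit the bookkeeping the paper leaves implicit --- extending the assignment at the fresh variables $y,z$ and noting that truth of the members of $S$ depends only on the variables occurring in $S$ --- which is exactly the intended argument.
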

\noindent Given a branch $\theta$ in a tableau, collect the elements of the $\Theta,\Gamma$ for each box $\tseq{\Theta}{\Gamma}$ occurring in it in a single set $S_{\theta}$ (save for when $\Theta=\{ x\}$). $\theta$ is satisfiable if $S_{\theta}$ is, and any tableau is satisfiable if one of its branches is. Lemma \ref{setsat} implies
\begin{thm}
If a tableau $\mathscr{T}$ is satisfiable, and $\mathscr{T}'$ is obtained from $\mathscr{T}$ by a single expansion, then $\mathscr{T}'$ is satisfiable.
\end{thm}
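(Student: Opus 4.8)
The plan is a direct case analysis — no induction is needed, since a single expansion is a one-step transformation — on where in $\mathscr{T}$ the expansion takes place and on whether it is an $\alpha$- or a $\beta$-step. Recall that ``$\mathscr{T}$ is satisfiable'' unpacks to ``$S_\theta$ is satisfiable for some branch $\theta$ of $\mathscr{T}$'', and that an expansion replaces a single leaf box, either by one successor ($\alpha$) or by two ($\beta$); thus the branches of $\mathscr{T}'$ coincide with those of $\mathscr{T}$ except that the branch through the expanded leaf is either lengthened by one box or split into two. So I would first fix a branch $\theta$ with $S_\theta$ satisfiable and observe that if the expanded leaf is not on $\theta$, then $\theta$ is still a branch of $\mathscr{T}'$ with the same $S_\theta$, and there is nothing to do.

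In the remaining case $\theta$ ends in the box being expanded, and I would invoke Lemma \ref{setsat}. For an $\alpha$-step, the leaf $\tseq{\Theta}{\Gamma,\alpha,\Delta}$ becomes $\tseq{R_\alpha yz,\Theta}{\Gamma,\alpha_1(y),\alpha_2(z),\Delta}$, extending $\theta$ to a branch $\theta'$ with $S_{\theta'}=S_\theta\cup\{R_\alpha yz,\alpha_1(y),\alpha_2(z)\}$; since $y,z$ are fresh in the branch they are absent from $S_\theta$, so with $\alpha\in S_\theta$ Lemma \ref{setsat}(a) delivers satisfiability of $S_{\theta'}$. For a $\beta$-step, the leaf $\tseq{R_\beta yz,\Theta,\Theta'}{\Gamma,\Delta,\beta,\Gamma',\Delta'}$ becomes the pair $\tseq{\Theta}{\Gamma,\beta_1(y),\Gamma'}$ and $\tseq{\Theta'}{\Delta,\beta_2(z),\Delta'}$, splitting $\theta$ into $\theta_1,\theta_2$ with $S_{\theta_1}=S_\theta\cup\{\beta_1(y)\}$ and $S_{\theta_2}=S_\theta\cup\{\beta_2(z)\}$ (everything else in those boxes, including $R_\beta yz$, is already in $S_\theta$); since $\beta,R_\beta yz\in S_\theta$, Lemma \ref{setsat}(b) delivers satisfiability of $S_{\theta_1}$ or of $S_{\theta_2}$. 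Either way $\mathscr{T}'$ has a satisfiable branch.

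I expect the only slightly fiddly step to be justifying the set identities $S_{\theta'}=S_\theta\cup\{\dots\}$ and $S_{\theta_i}=S_\theta\cup\{\beta_i\}$ — concretely, checking that passing to a successor box drops nothing already collected in $S_\theta$. This should follow once one observes that each successor's tree is, modulo the node-identification conventions folded into the abbreviations $R_\alpha yz,\Theta$ and $\Theta,\Theta'$, a sub-multiset of the expanded box's tree together with at most the one new condition $R_\alpha yz$, and that each successor's formula list introduces no labeled signed formula beyond the displayed $\alpha_1(y),\alpha_2(z)$ (resp.\ $\beta_i$). The accompanying detail is to line up the tableau-rule proviso ``$y,z$ fresh in the current branch'' with the freshness hypothesis of Lemma \ref{setsat}(a); beyond that the argument is immediate from the definitions of $S_\theta$, of satisfiability, and of the two rules.
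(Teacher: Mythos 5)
Your proposal is correct and matches the paper's intent exactly: the paper gives no further argument beyond observing that Lemma \ref{setsat} implies the theorem, and your branch-by-branch case analysis (unchanged branches, $\alpha$-step via Lemma \ref{setsat}(a) with the freshness proviso, $\beta$-step via Lemma \ref{setsat}(b) splitting the satisfiable branch) is precisely the routine verification the paper leaves implicit. The set identities $S_{\theta'}=S_\theta\cup\{R_\alpha yz,\alpha_1(y),\alpha_2(z)\}$ and $S_{\theta_i}=S_\theta\cup\{\beta_i\}$ hold as you note, since the parent box remains on the branch and the successor boxes add only the displayed material.
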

\noindent The unsatisfiability of a closed tableau is now traced to its origin. Hence, provability of $A\vdash B$ means unsatisfiability of $\tseq{\emptyset}{x:A^{\bullet},x:B^{\circ}}$, yielding soundness:
\spnewtheorem{sjonnieb}[theorem]{Corollary}{\bseries}{\itshape}
\begin{cor} All models validate provable two-formula sequents $A\vdash B$.
\end{cor}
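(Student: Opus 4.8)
The plan is to argue by contraposition, combining the satisfiability-preservation Theorem with the trivial unsatisfiability of a closed branch. First I would record the base observation that a closed box $\tseq{x}{x:p^{\bullet},x:p^{\circ}}$ (or its mirror) is unsatisfiable: by clauses (2) and (3) in the definition of truth, no interpretation can make both $x:p^{\bullet}$ and $x:p^{\circ}$ true, as that would require $x^{*}\in V(p)$ and $x^{*}\notin V(p)$ simultaneously. Consequently, for any branch $\theta$ ending in such a box, $S_{\theta}$ contains both signed atoms and is unsatisfiable; hence that branch is not satisfiable, and since a tableau counts as satisfiable exactly when at least one of its branches is, a closed tableau — all of whose branches end closed — is unsatisfiable.

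Next I would invoke the Theorem: satisfiability of a tableau is preserved under a single expansion step, whose contrapositive says that if $\mathscr{T}'$ is obtained from $\mathscr{T}$ by one expansion and $\mathscr{T}'$ is unsatisfiable, then $\mathscr{T}$ is unsatisfiable. A closed tableau $\mathscr{T}$ of $\tseq{x}{x:A^{\bullet},x:B^{\circ}}$ is built from the one-node tableau consisting of just the root box by a finite sequence of expansions; applying the contrapositive along this sequence (a routine induction on its length) propagates unsatisfiability back to the initial tableau. Thus, whenever $A\vdash B$ is provable, the set $S$ associated with the single branch of the initial tableau — which, by the convention that singleton trees $\{x\}$ contribute nothing to $S_{\theta}$, is precisely $\{x:A^{\bullet},x:B^{\circ}\}$ — is unsatisfiable.

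Finally I would unwind what unsatisfiability of $\{x:A^{\bullet},x:B^{\circ}\}$ means: there is no interpretation $I=\langle\mathscr{M},\cdot^{*}\rangle$ with $x^{*}\in V(A)$ and $x^{*}\notin V(B)$. Since $\mathscr{M}$ ranges over all models and $x^{*}$ over all resources of $\mathscr{M}$, this is exactly the statement that $V(A)\subseteq V(B)$ in every model $\mathscr{M}=\langle\mathscr{F},V\rangle$, i.e., every model validates $A\vdash B$. There is no real obstacle here; the argument is soundness-by-contraposition, and the only point demanding a moment's care is the bookkeeping that ties the root two-formula box to the two-element set $\{x:A^{\bullet},x:B^{\circ}\}$ so that the Theorem can be chained all the way down to it.
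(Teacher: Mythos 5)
Your argument is correct and is exactly the paper's route: the paper's proof consists of noting that a closed branch is unsatisfiable and then "tracing unsatisfiability to its origin" via the satisfiability-preservation Theorem, so provability forces unsatisfiability of $\{x:A^{\bullet},x:B^{\circ}\}$, i.e.\ $V(A)\subseteq V(B)$ in every model. You have merely spelled out the contrapositive induction and the final unwinding that the paper leaves implicit.
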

\noindent For completeness, it suffices to show that we can simulate algebraic derivations:
\begin{center}
\begin{tabular}{cccc}
$\begin{array}{c}\infer[\textit{Refl}]{A\leq A}{}\end{array}$ \ \ & \ \ $\begin{array}{c}\infer[\textit{Trans}]{A\leq C}{A\leq B & B\leq C}\end{array}$ \ \ & \ \ $\begin{array}{c}\infer=[r]{B\leq A\bs C}{\infer=[r]{A\lgtimes B\leq C}{A\leq C/B}}\end{array}$ \ \ & \ \ $\begin{array}{c}\infer=[cr]{A\obs C\leq B}{\infer=[cr]{C\leq A\lgplus B}{C\os B\leq A}}\end{array}$
\end{tabular}
\end{center}
already shown complete in \cite{kurtoninamoortgat10}. That, for any $A$, a closed tableau of $\tseq{\emptyset}{x:A^{\bullet},x:A^{\circ}}$ exists is a simple induction on $A$'s degree. The following lemma tackles (\textit{Trans}).

\begin{lem}\label{cutelim} The following expansion (\textit{bivalence}) is admissible for closed tableaux $\mathscr{T}_1$ and $\mathscr{T}_2$ of $\tseq{\Theta}{\Gamma,\phi,\Gamma'}$ and $\tseq{\Theta'}{\Delta,\phi^{\bot},\Delta'}$
\begin{prooftree}
\rootAtTop
\AxiomC{\tseq{\Theta}{\Gamma,\phi,\Gamma'}}
\AxiomC{\tseq{\Theta'}{\Delta,\phi^{\bot},\Delta'}}
\RightLabel{B}
\BinaryInfC{\tseq{\Theta,\Theta'}{\Gamma,\Delta,\Gamma',\Delta'}}
\end{prooftree}
provided $N(\Theta)\bigcap N(\Theta')=\{ u\}$, $u$ being the label of $\phi,\phi^{\bot}$, and $\Gamma=\emptyset$ or $\Delta'=\emptyset$. 
\end{lem}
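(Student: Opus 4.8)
The plan is to argue by a nested induction: outer on the degree of the formula $A$ (where $\phi$ is $u:A^{\bullet}$ or $u:A^{\circ}$), inner on the combined number of rule applications in $\mathscr{T}_1$ and $\mathscr{T}_2$. I would freely use Lemma~\ref{rename} to keep the eigenvariables introduced inside $\mathscr{T}_1$ disjoint from those of $\mathscr{T}_2$ (beyond the shared label $u$), and Lemma~\ref{cycl} to rotate the list $\Gamma,\Delta,\Gamma',\Delta'$ whenever the side condition ``$\Gamma=\emptyset$ or $\Delta'=\emptyset$'' has to be re-established for a subsidiary use of $\alpha$, $\beta$ or B. The cases are exhaustive because the endbox of a closed tableau either is a single closed box or carries a compound formula decomposed by the tableau's root inference. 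In the \emph{axiom} case, with $\mathscr{T}_1$ a single closed box, we have $\Theta=\{u\}$, $A$ atomic, and $\Gamma,\phi,\Gamma'$ consisting precisely of $\phi$ and $\phi^{\bot}$; from $N(\Theta)\cap N(\Theta')=\{u\}$ one obtains $\Theta,\Theta'=\Theta'$, so the target box coincides with $\mathscr{T}_2$'s box up to a cyclic permutation --- a genuine rotation being required only when $\phi$ follows $\phi^{\bot}$, which by hypothesis forces $\Delta'=\emptyset$ --- so $\mathscr{T}_2$, possibly followed by Lemma~\ref{cycl}, is the required tableau; the case where $\mathscr{T}_2$ is an axiom is symmetric.

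In the \emph{permutation} case the root inference of $\mathscr{T}_1$ decomposes a formula other than $\phi$ (forced whenever $A$ is atomic and $\mathscr{T}_1$ is not an axiom). That formula still occurs in the target box, so the same inference applies to the target; $\phi$ descends into the immediate subbox containing it --- unique in the $\beta$ case --- and I would invoke the inner induction hypothesis to B-cut the corresponding proper subtableau of $\mathscr{T}_1$ against the untouched $\mathscr{T}_2$, carrying the other $\beta$-premise along unchanged. The inner measure strictly decreases. The variant in which it is $\mathscr{T}_2$'s root inference that fails to decompose $\phi^{\bot}$ is symmetric.

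In the \emph{principal} case the root inferences of $\mathscr{T}_1$ and $\mathscr{T}_2$ decompose $\phi$ and $\phi^{\bot}$, so $A$ is compound; as $\phi$ and $\phi^{\bot}$ carry opposite signs, exactly one of them --- say $\phi$ --- is of type $\alpha$ and the other of type $\beta$. Inspecting the tables, the components $\alpha_1(y),\alpha_2(z)$ of $\phi$ are the $\cdot^{\bot}$-duals of those of $\phi^{\bot}$, and $R_{\alpha}yz=R_{\beta}yz$. Hence $\mathscr{T}_1$ yields a closed tableau of \tseq{R_{\alpha}yz,\Theta}{\Gamma,\alpha_1(y),\alpha_2(z),\Gamma'} and $\mathscr{T}_2$ closed tableaux of the two $\beta$-premises, which after renaming carry $\alpha_1(y)^{\bot}$ and $\alpha_2(z)^{\bot}$ at the labels $y,z$. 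Two successive B-inferences --- first on $\alpha_1(y)$, then on $\alpha_2(z)$, both of degree strictly below that of $A$, hence licensed by the outer induction hypothesis --- splice the three tableaux together; their trees recombine to $R_{\alpha}yz,\Theta,\Theta_a,\Theta_b=\Theta,\Theta'$ and their lists, after one further rotation by Lemma~\ref{cycl}, to $\Gamma,\Delta,\Gamma',\Delta'$.

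I anticipate that the hard part is not the case structure but the bookkeeping it entails: at every re-application of $\alpha$, $\beta$ or B one must verify the node-intersection conditions --- that the pertinent $N(\cdot)\cap N(\cdot)$ is exactly the intended variable, together with the $H/C$-compatibility demanded by $\beta$ --- and re-establish ``$\Gamma=\emptyset$ or $\Delta'=\emptyset$'', for which it matters precisely how the $\mathscr{T}_2$-list $\Delta,\Delta'$ is threaded through the branchings of $\mathscr{T}_1$ (and conversely), often only after a preparatory application of Lemma~\ref{cycl}.
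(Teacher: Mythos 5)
Your proposal is correct and follows essentially the same route as the paper's proof: the same case split (axiom case resolved via Lemma~\ref{cycl}, permutation of B over a root inference that does not decompose $\phi$, and in the principal case two successive B-cuts on the dual components $\alpha_1/\beta_1$ and $\alpha_2/\beta_2$, justified by the induction hypothesis, with the trees recombining since $R_{\alpha}yz=R_{\beta}yz$). The only difference is cosmetic: the paper runs a single induction on the combined degree of $\Gamma,\Delta,\Gamma',\Delta',\phi$, whereas you use a lexicographic measure (degree of the cut formula, then tableau size); both measures decrease in exactly the cases where they need to.
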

\begin{proof}
We proceed by induction on the degree of $\Gamma,\Delta,\Gamma',\Delta',\phi$.
\begin{enumerate}
\item One of $\tseq{\Theta}{\Gamma,\phi,\Gamma'}$ or $\tseq{\Theta'}{\Delta,\phi^{\bot},\Delta'}$ equals $\tseq{x}{x:p^{\bullet},x:p^{\circ}}$ or $\tseq{x}{x:p^{\circ},x:p^{\bullet}}$. Immediate, save for cases like the following, where we apply Lemma \ref{cycl}.
\begin{center}
\begin{tabular}{ccc}
$\infer[B]{\tseq{\Theta}{u:p^{\circ},\Gamma'} \ \ \ \ \tseq{u}{u:p^{\bullet},u:p^{\circ}}}{\tseq{\Theta}{\Gamma',u:p^{\circ}}}$ 
\end{tabular}
\end{center}
\item $\mathscr{T}_1$ does not close via $\phi$. Suppose $\mathscr{T}_1$ closes via $\alpha$. For example, $\Gamma=\Gamma_1,\alpha,\Gamma_1'$:
\begin{prooftree}
\rootAtTop
\AxiomC{\tseq{R_{\alpha}yz,\Theta}{\Gamma_1,\alpha_1(y),\alpha_2(z),\Gamma_1',\phi,\Gamma'}}
\RightLabel{$\alpha$}
\UnaryInfC{\tseq{\Theta}{\Gamma_1,\alpha,\Gamma_1',\phi,\Gamma'}}
\AxiomC{\tseq{\Theta'}{\Delta,\phi^{\bot},\Delta'}}
\RightLabel{B}
\BinaryInfC{\tseq{\Theta,\Theta'}{\Gamma_1,\alpha,\Gamma_1',\Delta,\Gamma',\Delta'}}
\end{prooftree}
Permuting $B$ over $\alpha$ reduces the induction measure:
\begin{prooftree}
\rootAtTop
\AxiomC{\tseq{R_{\alpha}yz,\Theta}{\Gamma_1,\alpha_1(y),\alpha_2(z),\Gamma_1',\phi,\Gamma'}}
\AxiomC{\tseq{\Theta'}{\Delta,\phi^{\bot},\Delta'}}
\RightLabel{B}
\BinaryInfC{\tseq{R_{\alpha}yz,\Theta,\Theta'}{\Gamma_1,\alpha_1(y),\alpha_2(z),\Gamma_1',\Delta,\Gamma',\Delta'}}
\RightLabel{$\alpha$}
\UnaryInfC{\tseq{\Theta,\Theta'}{\Gamma_1,\alpha,\Gamma_1',\Delta,\Gamma',\Delta'}}
\end{prooftree}
Otherwise, $\mathscr{T}_1$ closes via $\beta$. For example, $\Gamma=\Gamma_1,\Gamma_2,\beta,\Gamma_1'$ (in which case $\Delta'$ must be empty), $\Gamma'=\Gamma_1'',\Gamma_2'$ and $\Theta=\Theta_1,\Theta_2$:
\begin{prooftree}
\rootAtTop
\AxiomC{\tseq{\Theta_1}{\Gamma_1,\beta_1(y),\Gamma_1',\phi,\Gamma_1''}}
\AxiomC{\tseq{\Theta_2}{\Gamma_2,\beta_2(z),\Gamma_2'}}
\RightLabel{$\beta$}
\BinaryInfC{\tseq{R_{\beta}yz,\Theta_1,\Theta_2}{\Gamma_1,\Gamma_2,\beta,\Gamma_1',\phi,\Gamma_1'',\Gamma_2'}}
\AxiomC{\tseq{\Theta'}{\Delta,\phi^{\bot}}}
\RightLabel{B}
\BinaryInfC{\tseq{R_{\beta}yz,\Theta_1,\Theta_2,\Theta'}{\Gamma_1,\Gamma_2,\beta,\Gamma_1',\Delta,\Gamma_1'',\Gamma_2'}}
\end{prooftree}
Permuting (B) with $(\beta)$ reduces the induction measure:
\begin{prooftree}
\rootAtTop
\AxiomC{\tseq{\Theta_1}{\Gamma_1,\beta_1(y),\Gamma_1',\phi,\Gamma_1''}}
\AxiomC{\tseq{\Theta'}{\Delta,\phi^{\bot}}}
\RightLabel{B}
\BinaryInfC{\tseq{\Theta_1,\Theta'}{\Gamma_1,\beta_1(y),\Gamma_1',\Delta,\Gamma_1''}}
\AxiomC{\tseq{\Theta_2}{\Gamma_2,\beta_2(z),\Gamma_2'}}
\RightLabel{$\beta$}
\BinaryInfC{\tseq{R_{\beta}yz,\Theta_1,\Theta_2,\Theta'}{\Gamma_1,\Gamma_2,\beta,\Gamma_1',\Delta,\Gamma_1'',\Gamma_2'}}
\end{prooftree}

\item $\mathscr{T}_2$ does not close via $\phi^{\bot}$. Similar to case (2).
\item $\mathscr{T}_1$ and $\mathscr{T}_2$ close via $\phi$ and $\phi^{\bot}$ respectively. Say $\phi$ is a $\beta$, in which case $\phi^{\bot}$ is an $\alpha$. Then $\Theta=R_{\beta}yz,\Theta_1,\Theta_2$, $\Gamma=\Gamma_1,\Gamma_2$ and $\Gamma'=\Gamma_1',\Gamma_2'$: % \small
\begin{prooftree}
\rootAtTop
\AxiomC{\tseq{\Theta_1}{\Gamma_1,\beta_1(y),\Gamma_1'}}
\AxiomC{\tseq{\Theta_2}{\Gamma_2,\beta_2(z),\Gamma_2'}}
\RightLabel{$\beta$}
\BinaryInfC{\tseq{R_{\beta}yz,\Theta_1,\Theta_2}{\Gamma_1,\Gamma_2,\beta,\Gamma_1',\Gamma_2'}}
\AxiomC{\tseq{R_{\alpha}yz,\Theta'}{\Delta,\alpha_1(y),\alpha_2(z),\Delta'}}
\RightLabel{$\alpha$}
\UnaryInfC{\tseq{\Theta'}{\Delta,\alpha,\Delta'}}
\RightLabel{B}
\BinaryInfC{\tseq{R_{\beta}yz,\Theta_1,\Theta_2,\Theta'}{\Gamma_1,\Gamma_2,\Delta,\Gamma_1',\Gamma_2',\Delta'}}
\end{prooftree} 
We invoke the induction hypothesis twice by replacing with B-expansions on $\beta_1(u),\alpha_1(u)$ and $\beta_2(v),\alpha_2(v)$, each of lower degree: 
\begin{prooftree}
\rootAtTop
\AxiomC{\tseq{\Theta_1}{\Gamma_1,\beta_1(y),\Gamma_1'}}
\AxiomC{\tseq{\Theta_2}{\Gamma_2,\beta_2(z),\Gamma_2'}}
\AxiomC{\tseq{R_{\alpha}yz,\Theta'}{\Delta,\alpha_1(y),\alpha_2(z),\Delta'}}
\RightLabel{B}
\BinaryInfC{\tseq{R_{\alpha}yz,\Theta_2,\Theta'}{\Gamma_2,\Delta,\alpha_1(y),\Gamma_2',\Delta'}}
\RightLabel{B}
\BinaryInfC{\tseq{R_{\alpha}yz,\Theta_1,\Theta_2,\Theta'}{\Gamma,\Gamma_2,\Delta,\Gamma_1',\Gamma_2',\Delta'}}
\end{prooftree} % \normalsize
\end{enumerate}
\end{proof}
\noindent Simulation of transitivity immediately follows. Lemma \ref{cutelim} also applies in showing (co)residuation derivable. For example, suppose we have a closed tableau of $\tseq{z}{z:C^{\bullet},z:(A\lgplus B)^{\circ}}$. Then for some (fresh) $y$, $\tseq{y}{y:(A\obs C)^{\bullet},y:B^{\circ}}$ also closes:
\begin{prooftree}
\rootAtTop
\AxiomC{\tseq{y}{y:B^{\bullet},y:B^{\circ}}}
\AxiomC{\tseq{x}{x:A^{\circ},x:A^{\bullet}}}
\RightLabel{$\beta$}
\BinaryInfC{\tseq{R_{\lgplus}zxy}{x:A^{\circ},z:(A\lgplus B)^{\bullet},y:B^{\circ}}}
\AxiomC{\tseq{z}{z:C^{\bullet},z:(A\lgplus B)^{\circ}}}
\RightLabel{B}
\BinaryInfC{\tseq{R_{\lgplus}zxy}{x:A^{\circ},z:C^{\bullet},y:B^{\circ}}}
\RightLabel{$\alpha$}
\UnaryInfC{\tseq{y}{y:(A\obs C)^{\bullet},y:B^{\circ}}}
\end{prooftree}
% \begin{prooftree}
% \rootAtTop
% \AxiomC{\tseq{y}{y:A^{\bullet},y:A^{\circ}}}
% \AxiomC{\tseq{z}{z:B^{\bullet},z:B^{\circ}}}
% \RightLabel{$\beta$}
% \BinaryInfC{\tseq{R_{\lgtimes}xyz}{y:A^{\bullet},z:B^{\bullet},x:(A\lgtimes B)^{\circ}}}
% \AxiomC{\tseq{x}{x:(A\lgtimes B)^{\bullet},x:C^{\circ}}}
% \RightLabel{B}
% \BinaryInfC{\tseq{R_{\lgtimes}xyz}{y:A^{\bullet},z:B^{\bullet},x:C^{\circ}}}
% \RightLabel{$\alpha$}
% \UnaryInfC{\tseq{\emptyset}{y:A^{\bullet},y:(C/B)^{\circ}}}
% \end{prooftree}
The above observations imply\begin{thm} The tableau method for \textbf{LG} is complete.
\end{thm}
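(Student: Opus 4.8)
The plan is to obtain tableau completeness from the completeness, with respect to arbitrary models, of the algebraic presentation generated by \textit{Refl}, \textit{Trans}, $(r)$ and $(cr)$ --- precisely the result imported from \cite{kurtoninamoortgat10}. Given that result, model-validity of a two-formula sequent $A\vdash B$ furnishes an algebraic derivation of $A\leq B$, so it suffices to relay such derivations to the tableau system: by induction on an algebraic derivation of $A\leq B$, I would show that $\tseq{x}{x:A^{\bullet},x:B^{\circ}}$ has a closed tableau. Concretely this amounts to three closure properties: (i) for every $A$, the box $\tseq{x}{x:A^{\bullet},x:A^{\circ}}$ closes (simulating \textit{Refl}); (ii) closed tableaux of $\tseq{x}{x:A^{\bullet},x:B^{\circ}}$ and $\tseq{x}{x:B^{\bullet},x:C^{\circ}}$ yield one of $\tseq{x}{x:A^{\bullet},x:C^{\circ}}$ (simulating \textit{Trans}); and (iii) for each $(r)$- and $(cr)$-interderivability, in each direction, a closed tableau of one box transforms into one of the other.

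For (i) I would induct on the degree of $A$: the atomic case is a closed box outright, and otherwise one $\alpha$-expansion followed by one $\beta$-expansion --- in whichever order the Smullyan classification forces, e.g. $x:(A_1\lgtimes A_2)^{\bullet}$ being an $\alpha$ and $x:(A_1\lgtimes A_2)^{\circ}$ a $\beta$ --- strips the outermost connective on both occurrences simultaneously and reduces the goal to boxes $\tseq{y}{y:A_i^{\bullet},y:A_i^{\circ}}$ closed by the induction hypothesis. For (ii) I would rename variables via Lemma \ref{rename} so the two premises share the label $x$, apply Lemma \ref{cycl} to bring them into a shape meeting the ``$\Gamma=\emptyset$ or $\Delta'=\emptyset$'' restriction of \textit{bivalence}, and then invoke Lemma \ref{cutelim} with the cut formula $x:B^{\circ}$, whose conclusion box is exactly $\tseq{x}{x:A^{\bullet},x:C^{\circ}}$. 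For (iii) I would, in each direction, prepend a short fixed derivation that combines the identity boxes on the immediate subformulas supplied by (i) through one $\beta$-step, splices in the given closed tableau through one $B$-step, and refolds the result by one $\alpha$-step into the target formula --- exactly the pattern of the passage from $\tseq{z}{z:C^{\bullet},z:(A\lgplus B)^{\circ}}$ to $\tseq{y}{y:(A\obs C)^{\bullet},y:B^{\circ}}$ displayed above --- the remaining directions following from the left/right and input/output symmetries together with the $\{\lgtimes,/,\bs\}$ versus $\{\lgplus,\os,\obs\}$ duality.

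Composing these transformations along a given algebraic derivation of $A\leq B$ then yields a closed tableau of $\tseq{x}{x:A^{\bullet},x:B^{\circ}}$, i.e.\ a proof of $A\vdash B$, which is what completeness demands. The only substantive difficulty --- the permutation argument underlying admissibility of \textit{bivalence} --- has already been settled in Lemma \ref{cutelim}; what remains is bookkeeping, namely checking that the fixed blocks inserted in (iii) respect the side conditions on $N$, $H$, $C$ and the emptiness restrictions on the $\beta$- and $B$-rules, which is routine once Lemma \ref{cycl} is available to cyclically realign the formula lists as needed.
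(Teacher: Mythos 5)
Your proposal is correct and follows essentially the same route as the paper: reduce completeness to simulating the algebraic axiomatization of \cite{kurtoninamoortgat10}, handle \textit{Refl} by induction on degree, \textit{Trans} by the admissible bivalence rule of Lemma \ref{cutelim} (with Lemmas \ref{rename} and \ref{cycl} for the bookkeeping), and (co)residuation by the identity-box/$\beta$/B/$\alpha$ pattern the paper itself displays for the $\obs$ case, appealing to symmetry for the remaining directions.
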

\noindent The following is now an easy consequence of the subformula property:
\begin{cor}\label{conserv}
\textbf{LG} conservatively extends \textbf{NL}.
\end{cor}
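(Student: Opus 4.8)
``Conservatively extends'' splits into two claims. That every inequality derivable in \textbf{NL} is derivable in \textbf{LG} is immediate, since the rules of \textbf{NL} --- \emph{Refl}, \emph{Trans} and residuation of $\{\lgtimes,/,\bs\}$ --- are among those of \textbf{LG}. The content lies in the converse: whenever \textbf{LG} derives $A\leq B$ with $A,B$ built from atoms using only $\lgtimes,/,\bs$, we must produce an \textbf{NL}-derivation.

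The plan is to exploit the \emph{subformula property} visible in the rules. By the $\alpha$/$\beta$ tables, every component $\alpha_1(y),\alpha_2(z),\beta_1(y),\beta_2(z)$ produced by an expansion is an immediate signed subformula of the formula expanded; so by induction on the tableau, in a tableau of $\tseq{x}{x:A^{\bullet},x:B^{\circ}}$ every labeled signed formula has formula part a subformula of $A$ or of $B$. Likewise, comparing the boxes linked by a rule shows that any accessibility condition occurring in the tableau was placed there by the expansion of an $\alpha$-formula (the root box carries none), and the $\alpha$/$\beta$ tables assign $R_{\lgtimes}$ to the families $\lgtimes,/,\bs$ and $R_{\lgplus}$ to $\lgplus,\os,\obs$. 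Hence, when $A,B$ avoid $\lgplus,\os,\obs$, no such connective and no condition $R_{\lgplus}$ appears anywhere in a tableau of $\tseq{x}{x:A^{\bullet},x:B^{\circ}}$: the tableau lies entirely within the sublogic --- the \textbf{NL}-tableau calculus --- obtained by deleting the rows for $\lgplus,\os,\obs$ and the relation $R_{\lgplus}$.

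It then suffices to know that the \textbf{NL}-tableau calculus is sound and complete for algebraic \textbf{NL}. Soundness is inherited. For completeness one verifies that \S\S2--3 specialize to the restricted language: Lemmas~\ref{rename}, \ref{cycl}, \ref{setsat} and \ref{cutelim}, together with the soundness and completeness theorems, are stated schematically in Smullyan's notation and create no new connectives, and the tableaux simulating \emph{Refl}, \emph{Trans} (via \emph{bivalence}) and residuation of $\{\lgtimes,/,\bs\}$ use only \textbf{NL}-tableaux. (Alternatively one argues semantically, bypassing tableaux altogether: soundness of algebraic \textbf{LG} gives validity of $A\leq B$ in every frame $\langle W,R_{\lgtimes},R_{\lgplus},V\rangle$, hence in every frame $\langle W,R_{\lgtimes},V\rangle$, and \cite{kurtonina95,kurtoninamoortgat10} provide completeness of algebraic \textbf{NL} over the latter.) Assembling the first route: from \textbf{LG}'s derivation of $A\leq B$ the completeness theorem yields a closed tableau of $\tseq{x}{x:A^{\bullet},x:B^{\circ}}$; by the subformula property this is a closed \textbf{NL}-tableau; so \textbf{NL} derives $A\leq B$. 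The only point requiring care is the claim that the soundness/completeness development of \S\S2--3 transfers verbatim to the restricted system --- routine, since every rule and lemma there is already uniform in the $\alpha$/$\beta$ classification, so no genuine obstacle arises.
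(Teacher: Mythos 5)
Your argument is correct and is essentially the paper's own: the paper justifies the corollary with a one-line appeal to the subformula property, and your main route (a closed tableau of $\tseq{x}{x:A^{\bullet},x:B^{\circ}}$ with $A,B$ over $\{\lgtimes,/,\bs\}$ contains only $R_{\lgtimes}$-conditions and subformulas of $A,B$, hence lives in the \textbf{NL}-fragment, whose adequacy for algebraic \textbf{NL} follows from inherited model-theoretic soundness together with the frame completeness of \textbf{NL} from \cite{kurtonina95}) is exactly the intended elaboration of that remark. Your parenthetical purely semantic alternative is also sound, and in fact makes the needed appeal to \textbf{NL}'s frame completeness explicit, which is the one ingredient your phrase ``soundness is inherited'' glosses over in the main route.
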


\section{Lambek-Grishin grammars are context-free}
We use our tableau method to establish context-freeness of Lambek-Grishin grammars. Following the strategy laid out in \cite{pentus99} and \cite{jager04}, we rely on an interpolation property proven in Lemma \ref{interpol}.

By an \textbf{LG} \textit{grammar} $\mathscr{G}$ we shall understand a tuple $\langle\mathscr{A},L,g^{\Diamond}\rangle$ consisting of: a set of \textit{words} $\mathscr{A}$; a \textit{lexicon} $L$ mapping words to (finite) sets of signed (!) formulas; and a signed atomic \textit{goal} formula $g^{\Diamond}$ ($\Diamond\in\{\bullet,\circ\}$). The \textit{language} $\mathscr{L}(\mathscr{G})$ recognized by $\mathscr{G}$ we then define by the set of lists $w_1,\dots,w_n$ of words $w_i\in\mathscr{A}$ ($1\leq i\leq n$) such that, for some $A_1^{\Diamond 1}\in L(w_1),\dots, A_n^{\Diamond n}\in L(w_n)$ ($\Diamond 1,\dots,\Diamond n\in\{\bullet,\circ\}$) and tree $\Theta$, $\tseq{\Theta}{x_1:A_1^{\Diamond 1},\dots,x_n:A_n^{\Diamond n},x:g^{\Diamond}}$ closes.

We proceed to show context-freeness of \textbf{LG} grammars. Recognizability of context-free languages is a consequence of Kandulski's results for \textbf{NL} and Corollary \ref{conserv}. Our strategy for showing that every \textbf{LG} grammar also has an equivalent  context-free grammar follows closely that of \cite{jager04}, inspired in turn by \cite{pentus99}. We first prove an interpolation property for our tableaux. 

\begin{lem}\label{interpol} 
Suppose $\tseq{\Theta,\Theta'}{\Gamma,\Delta,\Gamma'}$ closes s.t. $N(\Theta)\bigcap N(\Theta')=\{ u\}$, and the variables in $\Gamma$ and $\Gamma'$ ($\Delta$) draw from $\Theta$ ($\Theta'$). Then for some $\phi=u:C^{\Diamond}$, with $\Diamond\in\{\bullet,\circ\}$ depending on whether $u\in H(\Theta)$ or $u\in C(\Theta)$, and with $C$ a subformula of (a formula in) $\Gamma,\Delta,\Gamma'$, $\tseq{\Theta}{\Gamma,\phi,\Gamma'}$ and $\tseq{\Theta'}{\Delta,\phi^{\bot}}$ close.
\end{lem}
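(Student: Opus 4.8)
The plan is to induct on the degree of the formulas occurring in $\Gamma,\Delta,\Gamma'$ together with the structure of the closed tableau $\mathscr{T}$ of $\tseq{\Theta,\Theta'}{\Gamma,\Delta,\Gamma'}$, mirroring the case analysis of Lemma~\ref{cutelim} (bivalence) but now tracking which ``side'' of the splitting $\{\Theta,\Theta'\}$ each expansion lives on. First I would dispose of the base case: if $\tseq{\Theta,\Theta'}{\Gamma,\Delta,\Gamma'}=\tseq{x}{x:p^{\bullet},x:p^{\circ}}$ (up to sign swap), then both signed atoms are labeled by the single node $x$, which must be $u$; take $\phi=u:p^{\bullet}$ or $u:p^{\circ}$ according to whether $u\in H(\Theta)$ or $u\in C(\Theta)$, so that $\tseq{\Theta}{\Gamma,\phi,\Gamma'}$ and $\tseq{\Theta'}{\Delta,\phi^{\bot}}$ are again closed atomic boxes (using Lemma~\ref{cycl} to reorder as in case~(1) of Lemma~\ref{cutelim}). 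Here $p$ is trivially a subformula of the box's formulas.

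For the inductive step, consider the first expansion of $\mathscr{T}$. If it is an $(\alpha)$-expansion targeting some $\alpha\in\Gamma,\Gamma'$ (so its variables draw from $\Theta$), I would apply the induction hypothesis to the premise $\tseq{R_\alpha yz,\Theta,\Theta'}{\dots,\alpha_1(y),\alpha_2(z),\dots}$ with the refined splitting $\{R_\alpha yz,\Theta\}\cup\{\Theta'\}$ — the intersection is still $\{u\}$ since $y,z$ are fresh — obtaining an interpolant $\phi$; reapplying $(\alpha)$ to $\tseq{\Theta}{\Gamma,\phi,\Gamma'}$ (after reinstating $R_\alpha yz$) gives the desired box, and $C$ is still a subformula since $\alpha_1,\alpha_2$ are subformulas of $\alpha$. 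The symmetric case, $\alpha\in\Delta$, puts the expansion on the $\Theta'$ side and is handled the same way. The $(\beta)$-expansion cases are analogous, but require care: a $\beta\in\Gamma,\Gamma'$ splits $\Theta$ into two subtrees and the branch splits in two; on each of the two resulting branches the induction hypothesis delivers an interpolant, and I expect to combine these by the bivalence rule (Lemma~\ref{cutelim}) on the common node of the $\beta$-split to recover a single interpolant $\phi$ at $u$ — this is exactly where Lemma~\ref{cutelim} is pulled in. One must check the subformula bookkeeping: $C$ built from subformulas of $\Gamma,\Delta,\Gamma'$ augmented by $\beta_1,\beta_2$, which are themselves subformulas of $\beta$.

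The genuinely delicate case is the ``mixed'' one where the first expansion is a $(\beta)$ whose principal formula $\beta$ sits in $\Gamma,\Gamma'$ but which, because of the linearity constraint on trees, forces one of the two premise-branches to contain $\Delta$ and a piece of $\Theta'$ while the other stays purely on the $\Theta$ side — or, dually, a $(\beta)$ on the $\Theta'$ side reaching into $\Gamma,\Gamma'$. Here I have to produce an interpolant for a box in which the $\{u\}$-frontier of the two subtrees is no longer the original $u$ but one of the freshly introduced $\beta$-nodes; the plan is to run the induction hypothesis with respect to the new frontier node, obtain an interpolant there, and then thread it back through a bivalence step to re-express the cut at $u$. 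The bookkeeping that $u\in H(\Theta)$ iff the interpolant's sign is $\bullet$ (and the dual) must be checked against the $H/C$ tables for trees and for the $\alpha,\beta$ classification — this side-condition is the main obstacle, and it is what dictates the precise phrasing of the lemma. Once all cases close, the lemma follows; note the length bound (''$C$ a subformula of $\Gamma,\Delta,\Gamma'$'') is what will later bound the size of the context-free grammar in the application.
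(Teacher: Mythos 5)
Your base case and your treatment of $\alpha$-expansions are essentially the paper's: induct on the degree of $\Gamma,\Delta,\Gamma'$, put the fresh condition $R_{\alpha}yz$ on the side of the split where the expanded formula lives, pull the witness through, and re-apply $(\alpha)$. The $\beta$-cases, however, contain a genuine gap. First, the ``mixed'' case you worry about cannot arise: $u$ occurs once in $\Theta$ and once in $\Theta'$, hence is an internal node of $\Theta,\Theta'$ and labels no formula of $\Gamma,\Delta,\Gamma'$; so when the tableau closes via some $\beta$ in $\Gamma,\Gamma'$, the condition $R_{\beta}yz$ lies in $\Theta$, $\Theta$ splits as $R_{\beta}yz,\Theta_1,\Theta_2$, and the whole of $\Theta'$ (together with $\Delta$) is carried into the unique premise whose tree component contains $u$. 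The frontier $\{u\}$ is inherited intact by that premise; it never moves to a fresh $\beta$-node.

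Because of this, the correct move --- and the one the paper makes --- is to apply the induction hypothesis only to that one premise (still split at $u$, with $\Theta'$ and $\Delta$ on one side), obtain the witness $\omega$ there, and then recover a closed tableau of \tseq{\Theta}{\Gamma,\omega,\Gamma'} by re-applying the very same $\beta$-expansion against the untouched second premise. No appeal to Lemma \ref{cutelim} is needed anywhere in the interpolation proof. Your plan to extract an interpolant from each of the two premises and merge them by bivalence at the $\beta$-nodes is not well defined: the second premise contains no occurrence of $u$ at all, so there is no instance of the lemma to apply to it; and a bivalence step eliminates a formula, it does not relocate an interpolant from $y$ or $z$ to $u$, so ``threading back through a bivalence step'' cannot produce the required $u$-labelled formula. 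The only delicate bookkeeping in the real proof is the degenerate subcase where $\Gamma$ (or $\Delta$) consists of the single formula $\psi=\beta$, in which case $\Theta$ (resp.\ $\Theta'$) is $\{u\}$ and the witness is essentially $\psi$ itself, together with the case split on whether $u\in N(\Theta_1)$ or $u\in N(\Theta_2)$; this replaces the cut-based machinery you propose.
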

\begin{proof}
We refer to $\phi$ and $C$ interchangeably as the \textit{witness} for $\Delta$ (borrowing terminology from \cite{jager04}). We proceed by induction on the degree of $\Gamma,\Delta,\Gamma'$. If $S=\tseq{\Theta,\Theta'}{\Gamma,\Delta,\Gamma'}$ is already of the form $\tseq{u}{u:p^{\bullet},u:p^{\circ}}$ or $\tseq{u}{u:p^{\circ},u:p^{\bullet}}$, take $C=p$. Otherwise, the tableau $\mathscr{T}$ for $S$ closes via some $\psi$ in $\Gamma$, $\Gamma'$ or $\Delta$. 
\begin{enumerate}
\item $\phi\in\Delta$. If $\psi=\alpha$, $\Delta=\Delta_1,\alpha,\Delta_2$ with $\mathscr{T}$ taking the form
\begin{prooftree}
\rootAtTop
\AxiomC{\tseq{R_{\alpha}yz,\Theta,\Theta'}{\Gamma,\Delta_1,\alpha_1(y),\alpha_2(z),\Delta_2,\Gamma'}}
\RightLabel{$\alpha$}
\UnaryInfC{\tseq{\Theta,\Theta'}{\Gamma,\Delta_1,\alpha,\Delta_2,\Gamma'}}
\end{prooftree}
Apply the induction hypothesis to obtain a witness $\omega$ for $\Delta_1,\alpha_1(y),\alpha_2(z),\Delta_2$, i.e.,  $\tseq{\Theta}{\Gamma,\omega,\Gamma'}$ and $\tseq{R_{\alpha}yz,\Theta'}{\Delta_1,\alpha_1(y),\alpha_2(z),\Delta_2,\omega^{\bot}}$ close. We can take $\phi=\omega$. Indeed, we obtain a closed tableau for $\tseq{\Theta'}{\Delta_1,\alpha,\Delta_2,\omega^{\bot}}$ by an $\alpha$-expansion:
\begin{prooftree}
\rootAtTop
\AxiomC{\tseq{R_{\alpha}yz,\Theta'}{\Delta_1,\alpha_1(y),\alpha_2(z),\Delta_2,\omega^{\bot}}}
\RightLabel{$\alpha$}
\UnaryInfC{\tseq{\Theta'}{\Delta_1,\alpha,\Delta_2,\omega^{\bot}}}
\end{prooftree}
If $\psi=\beta$, we must consider two subcases. If $\Delta=\psi$, then $\Theta'=\{ u\}$ and we may take $\phi=\psi^{\bot}$. Otherwise, $\Theta'=R_{\beta}yz,\Theta_1,\Theta_2$ ($N(\Theta_1)\bigcap N(\Theta_2)=\emptyset$, $N(\Theta_1)\bigcap N(R_{\beta}yz)=y$ and $N(\Theta_2)\bigcap N(R_{\beta}yz)=\{ z\}$), and either $u\in N(\Theta_1)$ or $u\in N(\Theta_2)$. In the former case, $\Delta=\Delta_1,\Delta_2,\beta,\Delta_3$ with $\mathscr{T}$ taking the form
\begin{prooftree}
\rootAtTop
\AxiomC{\tseq{\Theta,\Theta_1}{\Gamma,\Delta_1,\beta_1(y),\Delta_3,\Gamma'}}
\AxiomC{\tseq{\Theta_2}{\Delta_2,\beta_2(z)}}
\RightLabel{$\beta$}
\BinaryInfC{\tseq{R_{\beta}yz,\Theta,\Theta_1,\Theta_2}{\Gamma,\Delta_1,\Delta_2,\beta,\Delta_3,\Gamma'}}
\end{prooftree}
Apply the induction hypothesis to find a witness $\omega$ for $\Delta_1,\beta_1(y),\Delta_3$, i.e., so that $\tseq{\Theta}{\Gamma,\omega,\Gamma'}$ and $\tseq{\Theta_1}{\Delta_1,\beta_1(y),\Delta_3,\omega^{\bot}}$ close. We may take $\phi=\omega$. Indeed, a closed tableau for $\tseq{\Theta_1,\Theta_2}{\Delta_1,\Delta_2,\beta_2(z),\Delta_3,\omega^{\bot}}$ is found after a $\beta$-expansion:
\begin{prooftree}
\rootAtTop
\AxiomC{\tseq{\Theta_1}{\Delta_1,\beta_1(y),\Delta_3,\omega^{\bot}}}
\AxiomC{\tseq{\Theta_2}{\Delta_2,\beta_2(z)}}
\RightLabel{$\beta$}
\BinaryInfC{\tseq{\Theta_1,\Theta_2}{\Delta_1,\Delta_2,\beta_2(z),\Delta_3,\omega^{\bot}}}
\end{prooftree}
If instead $u\in N(\Theta_2)$, then $\Delta=\Delta_1,\beta,\Delta_2,\Delta_3$ and $\mathscr{T}$ takes the form
\begin{prooftree}
\rootAtTop
\AxiomC{\tseq{\Theta_1}{\beta_1(y),\Delta_2}}
\AxiomC{\tseq{\Theta,\Theta_2}{\Gamma,\Delta_1,\beta_2(z),\Delta_3,\Gamma'}}
\RightLabel{$\beta$}
\BinaryInfC{\tseq{\Theta_1,\Theta_2}{\Gamma,\Delta_1,\beta_2(z),\Delta_2,\Delta_3,\Gamma'}}
\end{prooftree}
This time, apply the induction hypothesis on $\tseq{\Theta,\Theta_2}{\Gamma,\Delta_1,\beta_2(z),\Delta_3,\Gamma'}$.

\item $\psi\in\Gamma$. The case where $\psi=\alpha$ is easy (similar to when $\psi=\alpha\in\Delta$). So suppose $\psi=\beta$. If $\Gamma=\psi$, then $\Theta'=R_{\beta}yz,\Theta_1,\Theta_2$, $\Delta=\Delta_1,\Delta_2$ with $\mathscr{T}$ taking the form
\begin{prooftree}
\rootAtTop
\AxiomC{\tseq{\Theta_1}{\beta_1(y),\Delta_1}}
\AxiomC{\tseq{\Theta_2}{\beta_2(z),\Delta_2}}
\RightLabel{$\beta$}
\BinaryInfC{\tseq{R_{\beta}yz,\Theta_1,\Theta_2}{\beta,\Delta_1,\Delta_2}}
\end{prooftree}
Note that $\Gamma'=\emptyset$ as $u\not\in N(\Theta_1)\bigcup N(\Theta_2)$, so also $\Theta=\{ u\}$. Evidently, we may take $\phi=\psi$. Now suppose $\Theta=R_{\beta}yz,\Theta_1,\Theta_2$ with $N(\Theta_1)\bigcap N(\Theta_2)=\emptyset$, $N(\Theta_1)\bigcap N(R_{\beta}yz)=\{ y\}$ and $N(\Theta_2)\bigcap N(R_{\beta}yz)=\{ z\}$. We must consider the cases $u\in N(\Theta_1)$ and $u\in N(\Theta_2)$. We consider the former, the latter being handled similarly (although then $\Gamma'=\emptyset$). Now $\Gamma=\Gamma_1,\Gamma_2,\beta,\Gamma_3$ and $\Gamma'=\Gamma_1',\Gamma_2'$ such that $\mathscr{T}$ takes the form
\begin{prooftree}
\rootAtTop
\AxiomC{\tseq{\Theta_1,\Theta}{\Gamma_1,\beta_1(y),\Gamma_3,\Delta,\Gamma_1'}}
\AxiomC{\tseq{\Theta_2}{\Gamma_2,\beta_2(z),\Gamma_2'}}
\RightLabel{$\beta$}
\BinaryInfC{\tseq{R_{\beta}yz,\Theta_1,\Theta_2,\Theta'}{\Gamma_1,\Gamma_2,\beta,\Gamma_3,\Delta,\Gamma_1',\Gamma_2'}}
\end{prooftree}
and we may apply the induction hypothesis on $\tseq{\Theta_1,\Theta}{\Gamma_1,\beta_1(y),\Gamma_3,\Delta,\Gamma_1'}$ to find an $\omega$ for which $\tseq{\Theta_1}{\Gamma_1,\beta_1(y),\Gamma_3,\omega,\Gamma_1'}$ and $\tseq{\Theta'}{\Delta,\omega^{\bot}}$ close. We take $\phi=\omega$. Indeed, we find a closed tableau for $\tseq{R_{\beta}yz,\Theta_1,\Theta_2}{\Gamma,\omega,\Gamma'}$ as follows:
\begin{prooftree}
\rootAtTop
\AxiomC{\tseq{\Theta_1}{\Gamma_1,\beta_1(y),\Gamma_3,\omega,\Gamma_1'}}
\AxiomC{\tseq{\Theta_2}{\Gamma_2,\beta_2(z),\Gamma_2'}}
\RightLabel{$\beta$}
\BinaryInfC{\tseq{R_{\beta}yz,\Theta_1,\Theta_2}{\Gamma_1,\Gamma_2,\beta,\Gamma_3,\omega,\Gamma_1',\Gamma_2'}}
\end{prooftree}

\item $\phi\in\Gamma'$. Similar to the previous case.

\end{enumerate}

\end{proof}
\begin{lem}\label{axiomat}
For $T$ a set of formulas, closed under taking subformulas, define
\begin{center}
\begin{tabular}{ccl}
$\textbf{LG}_T$ & $=_{\textit{def}}$ & $\{ S=\tseq{x}{x:A^{\bullet},B^{\circ}} \ | \ A,B\in T \ \& \ S \ \textrm{closes}\}$ \\
 & $\bigcup$ & $\{ S=\tseq{x}{x:B^{\circ},x:A^{\bullet}} \ | \ A,B\in T \ \& \ S \ \textrm{closes}\}$ \\
 & $\bigcup$ & $\{ S=\tseq{R_{\lgtimes}xyz}{y:A^{\bullet},z:B^{\bullet},x:C^{\circ}} \ | \ A,B\in T \ \& \ S \ \textrm{closes}\}$ \\
 & $\bigcup$ & $\{ S=\tseq{R_{\lgtimes}xyz}{x:C^{\circ},y:A^{\bullet},z:B^{\bullet}} \ | \ A,B\in T \ \& \ S \ \textrm{closes}\}$ \\
 & $\bigcup$ & $\{ S=\tseq{R_{\lgtimes}xyz}{z:B^{\bullet},x:C^{\circ},y:A^{\bullet}} \ | \ A,B\in T \ \& \ S \ \textrm{closes}\}$ \\
 & $\bigcup$ & $\{ S=\tseq{R_{\lgplus}xyz}{y:A^{\circ},z:B^{\circ},x:C^{\bullet}} \ | \ A,B\in T \ \& \ S \ \textrm{closes}\}$ \\
 & $\bigcup$ & $\{ S=\tseq{R_{\lgplus}xyz}{x:C^{\bullet},y:A^{\circ},z:B^{\circ}} \ | \ A,B\in T \ \& \ S \ \textrm{closes}\}$ \\
 & $\bigcup$ & $\{ S=\tseq{R_{\lgplus}xyz}{z:B^{\circ},x:C^{\bullet},y:A^{\circ}} \ | \ A,B\in T \ \& \ S \ \textrm{closes}\}$
\end{tabular}
\end{center}
Now suppose $\tseq{\Theta}{\Gamma}$ closes with all formulas of $\Gamma$ in $T$. Then $\tseq{\Theta}{\Gamma}$ has a tableau whose branches end in members of $\textbf{LG}_T$ and with the following instance of bivalence (B) as the sole type of expansion, provided $\Gamma'$ is not empty.
\begin{prooftree}
\rootAtTop
\AxiomC{\tseq{\Theta}{\Gamma,\phi,\Gamma'}}
\AxiomC{\tseq{\Theta'}{\Delta,\phi^{\bot}}}
\RightLabel{B}
\BinaryInfC{\tseq{\Theta,\Theta'}{\Gamma,\Delta,\Gamma'}}
\end{prooftree}
\end{lem}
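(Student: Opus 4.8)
I plan to argue by induction on the number of relational conditions occurring in $\Theta$ (equivalently, on $|\Gamma|$), splitting $\Theta$ at an internal node at each step by means of the interpolation Lemma~\ref{interpol}. For the base cases, if $\Theta$ carries no condition then $\tseq{\Theta}{\Gamma}$ is a two-formula box $\tseq{x}{x:A^{\bullet},x:B^{\circ}}$ or its cyclic reverse, and if it carries exactly one, then $\tseq{\Theta}{\Gamma}$ has a single $R_{\lgtimes}xyz$ or $R_{\lgplus}xyz$ while $\Gamma$ is one of the three cyclic rotations of a three-formula list whose signs are dictated by the hypotheses and conclusions of that condition. In each of these cases the standing assumption that $\tseq{\Theta}{\Gamma}$ closes with every formula of $\Gamma$ in $T$ places it directly in $\textbf{LG}_T$, since the definition of $\textbf{LG}_T$ enumerates all the relevant cyclic rotations and demands only that the two ``input-side'' formulas lie in $T$. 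The degenerate tableau consisting of $\tseq{\Theta}{\Gamma}$ alone then suffices, using no expansion at all.

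Now let $\Theta$ have $k\geq 2$ conditions. Viewing $\Theta$ as an unrooted tree, the construction of complex trees guarantees a variable $u$ shared by exactly two conditions, necessarily a conclusion of one ``side'' and a hypothesis of the other; deleting $u$ splits $\Theta$ into subtrees $\Theta_1,\Theta_2$, each with $\geq 1$ but $<k$ conditions, with $N(\Theta_1)\cap N(\Theta_2)=\{u\}$ and $\Theta=\Theta_1,\Theta_2$ in the paper's sense. By planarity, the leaves of $\Theta_1$ and those of $\Theta_2$ occupy complementary arcs of the cyclic order recorded by $\Gamma$, and at least one of these arcs is realized as a prefix or as a proper interior block of the \emph{list} $\Gamma$; fixing notation so that this arc belongs to $\Theta_2$, I write $\Gamma=\Gamma_1,\Lambda,\Gamma_2$ with $\Lambda$ that contiguous block (the $\Theta_2$-labels) and with $\Gamma_1,\Gamma_2$ the $\Theta_1$-labels, arranged so that $\Gamma_2\neq\emptyset$. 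Applying Lemma~\ref{interpol} to $\tseq{\Theta_1,\Theta_2}{\Gamma_1,\Lambda,\Gamma_2}$ yields a witness $\phi=u:C^{\Diamond}$ for which $\tseq{\Theta_1}{\Gamma_1,\phi,\Gamma_2}$ and $\tseq{\Theta_2}{\Lambda,\phi^{\bot}}$ close, with $C$ a subformula of a formula occurring in $\Gamma$; since $T$ is closed under subformulas and every formula of $\Gamma$ is in $T$, also $C\in T$ --- this is the observation that keeps the recursion inside $\textbf{LG}_T$.

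Both $\tseq{\Theta_1}{\Gamma_1,\phi,\Gamma_2}$ and $\tseq{\Theta_2}{\Lambda,\phi^{\bot}}$ are boxes over trees with fewer conditions than $\Theta$ and with all formulas in $T$, so the induction hypothesis supplies tableaux for them of the required form, which I glue with a single $B$-expansion on $\phi$:
\begin{prooftree}
\rootAtTop
\AxiomC{\tseq{\Theta_1}{\Gamma_1,\phi,\Gamma_2}}
\AxiomC{\tseq{\Theta_2}{\Lambda,\phi^{\bot}}}
\RightLabel{B}
\BinaryInfC{\tseq{\Theta}{\Gamma_1,\Lambda,\Gamma_2}}
\end{prooftree}
This is a legitimate instance of the restricted bivalence rule of the statement: $N(\Theta_1)\cap N(\Theta_2)=\{u\}$ is the common label of $\phi$ and $\phi^{\bot}$, nothing follows $\phi^{\bot}$ in the right premise, $\Gamma_2\neq\emptyset$, and the conclusion is literally $\tseq{\Theta}{\Gamma}$. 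The resulting tableau uses only $B$-expansions and all of its leaves lie in $\textbf{LG}_T$, which closes the induction.

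I expect the only delicate point to be exactly this bookkeeping: one must choose the splitting variable $u$ and orient the two halves so that the interpolant lands in a position for which the concluding $B$-inference reproduces $\Gamma$ \emph{exactly} while still leaving a nonempty tail after $\phi$ in its left premise. This is always arrangeable because, for any internal edge, at least one of the two leaf-arcs appears as a non-suffix contiguous block of $\Gamma$ (a prefix, or an interior segment) --- if the naturally contiguous arc is a suffix, its complement is a prefix --- and that arc is the one supplied as $\Lambda$, the complementary, possibly wrap-around arc being split into the part $\Gamma_1$ before $\phi$ and the nonempty part $\Gamma_2$ after. The remaining ingredients --- that the two halves are strictly smaller, that interpolants cannot leave $T$, and that the minimal boxes are already members of $\textbf{LG}_T$ --- are routine.
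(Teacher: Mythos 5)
Your proposal is correct and follows essentially the same route as the paper's own proof: induction on the size of $\Theta$, with the base cases read off directly from the definition of $\textbf{LG}_T$, the inductive step splitting $\Theta$ at a shared variable, invoking Lemma~\ref{interpol} for the witness $\phi$, applying the induction hypothesis to both halves, and gluing with one (B)-expansion. Your additional remarks --- that subformula-closure of $T$ keeps the interpolant inside $T$, and the arc/contiguity bookkeeping guaranteeing a nonempty tail after $\phi$ --- only make explicit what the paper handles by assuming the decomposition $\Gamma=\Gamma_1,\Gamma_2,\Gamma_3$ and by swapping the roles of the two blocks when $\Gamma_3$ is empty.
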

\begin{proof}
By induction on the cardinality of $\Theta$. In the base case, $\Theta$ equals $\{ x\}$, $\{ R_{\lgtimes}xyz\}$ or $\{ R_{\lgplus}xyz\}$, and $\tseq{\Theta}{\Gamma}\in\textbf{LG}_T$ by definition. Now suppose $\Theta=\Theta_1,\Theta_2$, $N(\Theta_1)\bigcap N(\Theta_2)=\{ u\}$, both $\Theta_1\not =\{ u\}$ and $\Theta_2\not =\{ u\}$, $\Gamma=\Gamma_1,\Gamma_2,\Gamma_3$ and the variables in $\Gamma_1$ and $\Gamma_3$ ($\Gamma_2$) draw from $\Theta_1$ ($\Theta_2$). By Lemma \ref{interpol}, there now exists $\phi$ with label $u$ s.t. $\tseq{\Theta_1}{\Gamma_1,\phi,\Gamma_3}$ and $\tseq{\Theta_2}{\Gamma_2,\phi^{\bot}}$ close. We can assume $\Gamma_3$ is not empty, as otherwise we could have picked $\Gamma_1$ for instantiating $\Delta$ in (B) as opposed to $\Gamma_2$. Since the cardinalities of $\Theta_1$ and $\Theta_2$ are strictly smaller than that of $\Theta_1,\Theta_2$ (as $\Theta_2\not =\{ u\}$), the induction hypothesis applies to $\tseq{\Theta_1}{\Gamma_1,\phi,\Gamma_3}$ and $\tseq{\Theta_2}{\Gamma_2,\phi^{\bot}}$. The statement of the lemma obtains after another application of (B).
\end{proof}
\begin{thm}
For every \textbf{LG}-grammar $\mathscr{G}$, $\mathscr{L}(\mathscr{G})$ is context-free.
\end{thm}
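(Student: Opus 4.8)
The plan is to adapt the Pentus--J\"ager method (\cite{pentus99}, \cite{jager04}), with Lemma \ref{interpol} and Lemma \ref{axiomat} playing the role of the interpolation step. Fix an \textbf{LG}-grammar $\mathscr{G}=\langle\mathscr{A},L,g^{\Diamond}\rangle$ and let $T$ be the (finite) set of all subformulas of the formulas occurring in $\bigcup_{w\in\mathscr{A}}L(w)$ together with $g$; then $T$ is closed under subformulas, so Lemma \ref{axiomat} applies to it and $\textbf{LG}_T$ is finite. By definition, a list $w_1\cdots w_n$ lies in $\mathscr{L}(\mathscr{G})$ iff \tseq{\Theta}{x_1:A_1^{\Diamond 1},\dots,x_n:A_n^{\Diamond n},x:g^{\Diamond}} closes for some $A_i^{\Diamond i}\in L(w_i)$ and some tree $\Theta$; since every formula here lies in $T$, Lemma \ref{axiomat} rephrases this as: some such box has a tableau all of whose expansions are the restricted bivalence rule and all of whose leaves lie in the finite set $\textbf{LG}_T$. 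The structural picture is that such a tableau un-glues $\Theta$ one edge at a time; re-rooting $\Theta$ at the degree-one goal-node (equivalently, at the $R$-condition adjacent to it, carrying $g^{\Diamond}$ on its root-ward edge) exhibits it as a rooted binary tree whose $n$ leaves are the word-nodes $x_1,\dots,x_n$ in the left-to-right order induced by the cyclic order of $\Gamma$, whose internal nodes are the $R_{\lgtimes}$- and $R_{\lgplus}$-conditions, and whose edges each carry one witness formula $u:C^{\Diamond'}$ with $C\in T$, cut by a single bivalence step.

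I would then read off a context-free grammar $\mathscr{G}'=\langle N,\mathscr{A},P,g^{\Diamond}\rangle$. Take $N$ to be the set of signed formulas $\{C^{\bullet},C^{\circ}\mid C\in T\}$ (finite), with start symbol $g^{\Diamond}$, the intended reading being that $C^{\Diamond'}$ generates a string $u=w_1\cdots w_k$ iff \tseq{\Theta'}{x_1:B_1^{\Diamond 1},\dots,x_k:B_k^{\Diamond k},x_0:C^{\Diamond'}} closes for some tree $\Theta'$ and some lexical choices $B_i^{\Diamond i}\in L(w_i)$. The productions $P$ are obtained from $\textbf{LG}_T$ and $L$: from each two-formula member \tseq{x}{x:A^{\bullet},x:C^{\circ}} of $\textbf{LG}_T$ and each $w$ with $A^{\bullet}\in L(w)$, a \emph{lexical} production $C^{\circ}\to w$, and symmetrically with $\bullet/\circ$ interchanged; from each three-formula member of $\textbf{LG}_T$, after designating one slot as root-ward, a \emph{binary} production whose left-hand side is that slot's signed formula and whose right-hand side is the two remaining slot-formulas with polarities switched, in the left-to-right order fixed by the cyclic order of $\Gamma$; and, when a root-ward three-formula box already carries a lexically licensed formula in a non-root slot, the evident \emph{mixed} productions having the corresponding word in that position. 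All of $N$ and $P$ are finite, and $\mathscr{G}'$ is even effectively constructible, closure of a box being decidable by the subformula property --- though mere finiteness suffices for the statement.

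Then I would prove $\mathscr{L}(\mathscr{G}')=\mathscr{L}(\mathscr{G})$. For $\mathscr{L}(\mathscr{G}')\subseteq\mathscr{L}(\mathscr{G})$: by induction on a $\mathscr{G}'$-derivation with yield $w_1\cdots w_n$, reassemble bottom-up a closing box for that list; each production is witnessed by a closing box in $\textbf{LG}_T$ (together with a lexical assignment for lexical and mixed productions), and these closing boxes are combined by bivalence, which is admissible for closed tableaux by Lemma \ref{cutelim}. For the reverse inclusion: given $w_1\cdots w_n\in\mathscr{L}(\mathscr{G})$, fix a closing word-box, apply Lemma \ref{axiomat} to obtain a bivalence-only tableau with leaves in $\textbf{LG}_T$, view it as the re-rooted tree $\Theta$ above with each edge labeled by the subformula witness that Lemma \ref{interpol} supplies there, and turn each $\textbf{LG}_T$-leaf into the matching (lexical, binary, or mixed) production; the resulting $\mathscr{G}'$-derivation tree has yield $w_1\cdots w_n$. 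Since $\mathscr{G}'$ is context-free, so is $\mathscr{L}(\mathscr{G})$. Together with the already-noted converse --- context-free languages are \textbf{LG}-recognizable, via Kandulski's theorem for \textbf{NL} and Corollary \ref{conserv} --- this even identifies the two classes.

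The main obstacle is the faithful dictionary between the ``tensor-tree'' structure of boxes and context-free derivation trees. Unlike the intuitionistic \textbf{NL} case, a box's boundary carries formulas of both polarities and is ordered only cyclically, so the care goes into: (i) checking that re-rooting $\Theta$ at the goal-node genuinely yields a rooted binary tree whose leaves, read left to right, reproduce $w_1,\dots,w_n$ --- i.e. that the cyclic order of $\Gamma$ preserved by the $(\alpha)$- and $(\beta)$-expansions is a planar boundary order, and that degree-two variable-nodes contract harmlessly to edges; (ii) keeping the $\bullet/\circ$ decorations of nonterminals coherent under gluing, via $\cdot^{\bot}$-switching; and (iii) ensuring that each $\textbf{LG}_T$-leaf, whichever of its slots is taken root-ward and however many of its slots are lexically filled, maps to a well-formed production of $\mathscr{G}'$. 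None of this is deep: the substantive content --- that interpolants may be confined to the finite set $T$, and that any closed box reduces to a bivalence-only tableau over $\textbf{LG}_T$ --- has already been supplied by Lemmas \ref{interpol}, \ref{axiomat} and \ref{cutelim}.
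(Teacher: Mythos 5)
Your proposal is correct and takes essentially the same route as the paper's proof: a context-free grammar whose nonterminals are the signed formulas over $T$, whose productions are read off from $\textbf{LG}_T$ together with the lexicon, and whose equivalence with $\mathscr{G}$ is established by Lemma \ref{axiomat} (decomposing a closed word-box into a bivalence-only tableau over $\textbf{LG}_T$) in one direction and by admissibility of bivalence, Lemma \ref{cutelim}, in the other. The remaining differences are cosmetic and amount to renaming nonterminals by $\cdot^{\bot}$: you flip polarities on the right-hand sides of productions and start from $g^{\Diamond}$, where the paper flips the left-hand side and starts from $g^{\Diamond\bot}$, and you fold lexical insertion into the leaf productions (using identity boxes $\tseq{x}{x:A^{\bullet},x:A^{\circ}}\in\textbf{LG}_T$), where the paper instead keeps unary productions $B^{\bullet}\rightarrow A^{\bullet}$, $A^{\circ}\rightarrow B^{\circ}$ together with separate lexical rewrites $A^{\Diamond}\rightarrow w$.
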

\begin{proof}
Suppose we have an \textbf{LG}-grammar $\mathscr{G}_1=\langle\mathscr{A},L,g^{\Diamond}\rangle$. Refer by $T$ to the set of formulas in the range of $L$, closed under taking subformulas. We now construct the following context-free grammar $\mathscr{G}_2$: its set of terminals coincides with $\mathscr{A}$; its nonterminals are specified by $\{ A^{\bullet} \ | \ A\in T\}\bigcup\{ A^{\circ} \ | \ A\in T\}$; its start symbol is $g^{\Diamond\bot}$ for $\Diamond\bot=\bullet$ if $\Diamond=\circ$ and $\Diamond\bot=\circ$ if $\Diamond=\bullet$; and its productions are given by
\begin{center}
\begin{tabular}{cl}
 & $\{ B^{\bullet}\rightarrow A^{\bullet} \ | \ \tseq{x}{x:A^{\bullet},x:B^{\circ}}\in\textbf{LG}_T\}$ \\
$\bigcup$ & $\{ A^{\circ}\rightarrow B^{\circ} \ | \ \tseq{x}{x:B^{\circ},x:A^{\bullet}}\in\textbf{LG}_T\}$ \\
$\bigcup$ & $\{ C^{\bullet}\rightarrow A^{\bullet},B^{\bullet} \ | \ \tseq{R_{\lgtimes}xyz}{y:A^{\bullet},z:B^{\bullet},x:C^{\circ}}\in\textbf{LG}_T\}$ \\
$\bigcup$ & $\{ B^{\circ}\rightarrow C^{\circ},A^{\bullet} \ | \ \tseq{R_{\lgtimes}xyz}{x:C^{\circ},y:A^{\bullet},z:B^{\bullet}}\in\textbf{LG}_T\}$ \\
$\bigcup$ & $\{ A^{\circ}\rightarrow B^{\bullet},C^{\circ} \ | \ \tseq{R_{\lgtimes}xyz}{z:B^{\bullet},x:C^{\circ},y:A^{\bullet}}\in\textbf{LG}_T\}$ \\
$\bigcup$ & $\{ C^{\circ}\rightarrow A^{\circ},B^{\circ} \ | \ \tseq{R_{\lgplus}xyz}{y:A^{\circ},z:B^{\circ},x:C^{\bullet}}\in\textbf{LG}_T\}$ \\
$\bigcup$ & $\{ B^{\bullet}\rightarrow C^{\bullet},A^{\circ} \ | \ \tseq{R_{\lgplus}xyz}{x:C^{\bullet},y:A^{\circ},z:B^{\circ}}\in\textbf{LG}_T\}$ \\
$\bigcup$ & $\{ A^{\bullet}\rightarrow B^{\circ},C^{\bullet} \ | \ \tseq{R_{\lgplus}xyz}{z:B^{\circ},x:C^{\bullet},y:A^{\circ}}\in\textbf{LG}_T\}$ \\
$\bigcup$ & $\{ A^{\bullet}\rightarrow w \ | \ w\in\mathscr{A}, \ A^{\bullet}\in L(w)\}$ \\
$\bigcup$ & $\{ A^{\circ}\rightarrow w \ | \ w\in\mathscr{A}, \ A^{\circ}\in L(w)\}$
\end{tabular}
\end{center}
We claim $\mathscr{G}_1$ and $\mathscr{G}_2$ recognize the same languages. 
\begin{itemize}
\item Going from left to right, assume $\mathscr{G}_1$ recognizes $w_1,\dots,w_n$. Then for some $A_1^{\Diamond 1}\in L(w_1),\dots,A_n^{\Diamond n}\in L(w_n)$ and $\Theta$, $S=\tseq{\Theta}{x_1:A_1^{\Diamond 1},\dots,x_n:A_n^{\Diamond n},x:g^{\Diamond}}$ closes. We claim $g^{\Diamond\bot}\rightarrow^* A_1^{\Diamond 1},\dots,A_n^{\Diamond n}$, and hence $g^{\Diamond\bot}\rightarrow^* w_1,\dots,w_n$. This follows from an inductive argument on the tableau of $S$ constructed by Lemma \ref{axiomat}, proving that if $\tseq{\Theta}{y_1:B_1^{\Diamond'1},\dots,y_m:B_m^{\Diamond'm},y:B^{\Diamond'}}$ closes, then $B^{\Diamond'\bot}\rightarrow^*B_1^{\Diamond'1},\dots,B_m^{\Diamond'm}$. The base cases follow from the construction of $\mathscr{G}_2$, while the sole inductive case depends on the transitive closure of $\rightarrow^*$. 

\item Conversely, suppose $g^{\Diamond\bot}\rightarrow^* w_1,\dots,w_n$. Then, by the construction of $\mathscr{G}_2$, $g^{\Diamond\bot}\rightarrow^* A_1^{\Diamond 1},\dots,A_n^{\Diamond n}$ for some $A_1^{\Diamond 1}\in L(w_1),\dots,A_n^{\Diamond n}\in L(w_n)$. Since all production rules involved draw from elements of $\textbf{LG}_T$, a straightforward inductive argument constructs a closed tableau of $\tseq{\Theta}{x_1:A_1^{\Diamond 1},\dots,x_n:A_n^{\Diamond n},x:g^{\Diamond}}$ for some $\Theta$ using $B$-expansions, and we remove the latter one by one from bottom to top through repeated applications of Lemma \ref{cutelim}.
\end{itemize}
\end{proof}
\noindent In \cite{moortgat09}, a slightly different notion of \textbf{LG}-grammars is used. Stated as a special case of our grammars $\mathscr{G}=\langle\mathscr{A},L,g^{\Diamond}\rangle$, $\Diamond$ is fixed at $\circ$ and the range of $L$ is restricted to signed formulas $A^{\bullet}$. Moreover, the language $\mathscr{L}(\mathscr{G})$ recognized by $\mathscr{G}$ now reads as the set of lists $w_1,\dots,w_n$ of words s.t. for some $A_1^{\bullet}\in L(w_1),\dots, A_n^{\bullet}\in L(w_n)$ and tree $\Theta$, $S=\tseq{\Theta}{x_1:A_1^{\bullet},\dots,x_n:A_n^{\bullet},x:g^{\circ}}$ closes, provided $\Theta$ lacks conditions of the form $R_{\lgplus}xyz$. Note, though, that the latter kind of conditions may still appear further down in the tableau for $S$. In particular, $A_1,\dots,A_n$ may freely contain connectives from the coresiduated family $\{\lgplus,\os,\obs\}$. Seeing as the above definitions constitute special cases of ours, context-freeness is preserved.

\subsubsection{Acknowledgements.} This work has benefited from discussions with Michael Moortgat, Jeroen Bransen and Vincent van Oostrom, as well as from comments from an anonymous referee. All remaining errors are my own.
\bibliography{bibliography}

\end{document}